\newcommand*\samethanks[1][\value{footnote}]{\footnotemark[#1]}
\newcolumntype{P}[1]{>{\centering\arraybackslash}p{#1}}
\newcommand{\Rom}[1]{\text{\uppercase\expandafter{\romannumeral #1\relax}}}
\def\##1\#{\begin{align}#1\end{align}}
\def\$#1\${\begin{align*}#1\end{align*}}
\renewcommand{\max}{\mathop{\mathrm{max}}}
\def\eqref#1{equation~\ref{#1}}
\def\1{\bm{1}}
\DeclareMathAlphabet{\mathsfit}{\encodingdefault}{\sfdefault}{m}{sl}
\SetMathAlphabet{\mathsfit}{bold}{\encodingdefault}{\sfdefault}{bx}{n}
\newcommand{\E}{\mathbb{E}}
\newcommand{\R}{\mathbb{R}}
\begin{document}

\title{ \LARGE Neural Collapse Meets Differential Privacy:
Curious Behaviors of NoisyGD with Near-perfect Representation Learning}

\author{Chendi Wang\thanks{Equal contributions.} \thanks{University of Pennsylvania. Email: chendi@wharton.upenn.edu.}\quad Yuqing Zhu\samethanks[1] \thanks{University of California, Santa Barbara. Email: yuqingzhu@ucsb.edu.} \quad Weijie J.~Su\thanks{University of Pennsylvania. Email: suw@wharton.upenn.edu.}\quad Yu-Xiang Wang\thanks{University of California, San Diego. Email: yuxiangw@ucsd.edu.}}

\date{\today}

\maketitle

\vspace{-0.25in}

\begin{abstract}
A recent study by \citet{de2022unlocking} has reported that large-scale representation learning through pre-training on a public dataset significantly enhances differentially private (DP) learning in downstream tasks, despite the high dimensionality of the feature space.
To theoretically explain this phenomenon, we consider the setting of a layer-peeled model in representation learning, which results in interesting phenomena related to learned features in deep learning and transfer learning, known as Neural Collapse (NC).

Within the framework of NC, we establish an error bound indicating that the misclassification error is independent of dimension when the distance between actual features and the ideal ones is smaller than a threshold. Additionally, the quality of the features in the last layer is empirically evaluated under different pre-trained models within the framework of NC, showing that a more powerful transformer leads to a better feature representation. Furthermore, we reveal that DP fine-tuning is less robust compared to fine-tuning without DP, particularly in the presence of perturbations. These observations are supported by both theoretical analyses and experimental evaluation. Moreover, to enhance the robustness of DP fine-tuning, we suggest several strategies, such as feature normalization or employing dimension reduction methods like Principal Component Analysis (PCA). Empirically, we demonstrate a significant improvement in testing accuracy by conducting PCA on the last-layer features.
\end{abstract}

\section{Introduction}
Recently, privately fine-tuning a publicly pre-trained model with differential privacy (DP) has become the workhorse of private deep learning. For example, ~\citet{de2022unlocking} demonstrates that fine-tuning the last-layer of an ImageNet pre-trained Wide-ResNet achieves an accuracy of $95.4\%$ on CIFAR-10 with $(\epsilon=2.0, \delta = 10^{-5})$-DP, surpassing the $67.0\%$ accuracy from private training from scratch with a three-layer convolutional neural network~\citep{abadi2016DPSGD}. Additionally, \citet{li2021large, yu2021differentially} show that pre-trained  BERT~\citep{devlin2018bert} and GPT-2~\citep{radford2018improving} models achieve near no-privacy utility trade-off when fine-tuned for sentence classification and generation tasks.


However, the empirical success of privately fine-tuning pre-trained large models appears to defy the worst-case dimensionality dependence in private learning problems --- noisy stochastic gradient descent (NoisySGD) requires adding noise scaled to $\sqrt{p}$ to each coordinate of the gradient in a model with $p$ parameters, rendering it infeasible for large models with millions of parameters. This suggests that the benefits of pre-training may help mitigate the dimension dependency in NoisySGD. A recent work~\citep{li2022does} makes a first attempt on this problem --- they show that if gradient magnitudes projected onto subspaces decay rapidly, the empirical loss of NoisySGD becomes independent to the model dimension. 
However, the exact behaviors of gradients remain intractable to analyze theoretically, and it remains uncertain whether the ``dimension independence" property is robust across different fine-tuning applications.

In this work, we explore private fine-tuning behaviors from an alternative direction — we employ an representation of pre-trained models using the Neural Collapse (NC) theory \citep{MR4250189} and study the dimension dependence in a specific private fine-tuning setup --- fine-tuning only the last layer of the pre-trained model, a benchmark method in private fine-tuning.

\begin{figure}[ht]
  \centering
  {\includegraphics[width=0.8\textwidth]{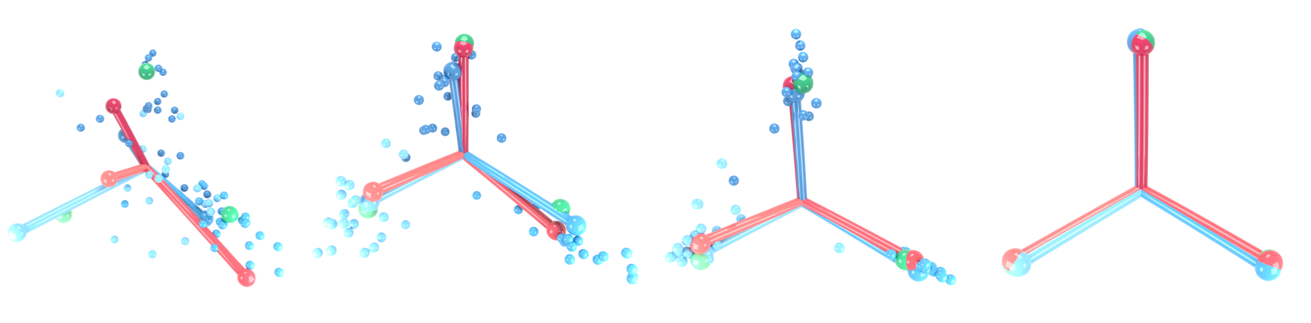}}
\caption{The figure depicts the evolution of the feature layer outputs of a VGG-13 neural network when trained on CIFAR-10 with three randomly selected classes. Each class is represented by a distinct color in the small blue sphere. As the training progresses, the last-layer feature means collapse onto their corresponding classes. Credit to ~\citet{MR4250189}.}
  \label{fig:nc_example}
\end{figure}

To elaborate, Neural Collapse~\citep{MR4250189, fang2021exploring,doi:10.1073/pnas.2221704120} is a phenomenon observed in the late stage of training deep classifier neural networks. At a high level, NC 
suggests that when a deep neural network is trained on a $K$-class classification task, the last-layer feature converges to $K$ distinct points (see Figure~\ref{fig:nc_example} for $K=3$). This insight provides a novel characterization of deep model behaviors, sparking numerous subsequent studies that delve into this phenomenon~\citep{masarczyk2023tunnel, li2023what}.
Notably, \citet{masarczyk2023tunnel} shows that when pre-training a model on CIFAR-10 and fine-tuning it on a subset of 10 classes from CIFAR-100, the collapse phenomenon has been observed.


To summarize,  we aim to  answer the following questions about private learning under Neural Collapse.
\begin{enumerate}
    \item \emph{When fine-tuning the last layer, what specific property of the features of that layer is important to make private learning dimension-independent?}
    \item \emph{How robust is 
    this dimension-independence property against perturbations?}
\end{enumerate}

\subsection{Contributions of this paper}

In essence, our contribution lies in correlating the performance and robustness of privately fine-tuning with the recently proposed Neural Collapse theory.



Theoretically, we identify a key structure of the last-layer feature  that leads to a
 dimension-independent misclassification error (using 0-1 loss) in noisy gradient descent (NoisyGD). We formalize this structure through the feature shift parameter $\beta$, which measures the $\ell_{\infty}$-distance between the actual last-layer features obtained from the pre-trained model on a private set, and the ideal maximally separable features posited by Neural Collapse theory. A smaller value of $\beta$ indicates a better representation of the last-layer feature.
We show that if the feature shift parameter $\beta$ remains below a certain threshold related to the model dimension $p$,  the sample complexity bound of achieving a $\gamma$ misclassification error is dimension-independent. 

Empirically, we evaluate the feature shift vector $\beta$ when fine-tuning different transformers on CIFAR-10. Figure \ref{fig:nc_cifar10} plots the distribution of per-class $\beta$ when the pre-trained transformer is either the Vision Transformer (ViT) or ResNet-50. Blue and green scatter plots represent the $\beta$ values for each sample from the two models, while purple and yellow scatter plots denote the median $\beta$ within each class. The median $\beta$ is centered around $0.10$ for ViT and around $0.20$ for ResNet-50.
The pre-trained ViT model is known to have better feature representations than the ResNet-50 model.
Our results show that: (1) $\beta$ is bounded for the two pre-trained models, with very few outliers, and (2) the better the pre-trained model, the smaller the $\beta$. For example, ViT (with $\beta\approx 0.1$) outperforms ResNet-50 (with $\beta\approx 0.2$) due to its smaller shift parameter.
We postpone the details of our experiments to Section \ref{sec:empirical-feature-shift}.

\begin{figure}[ht]
  \centering
  \hspace{1cm}
  \includegraphics[width=0.8\textwidth]{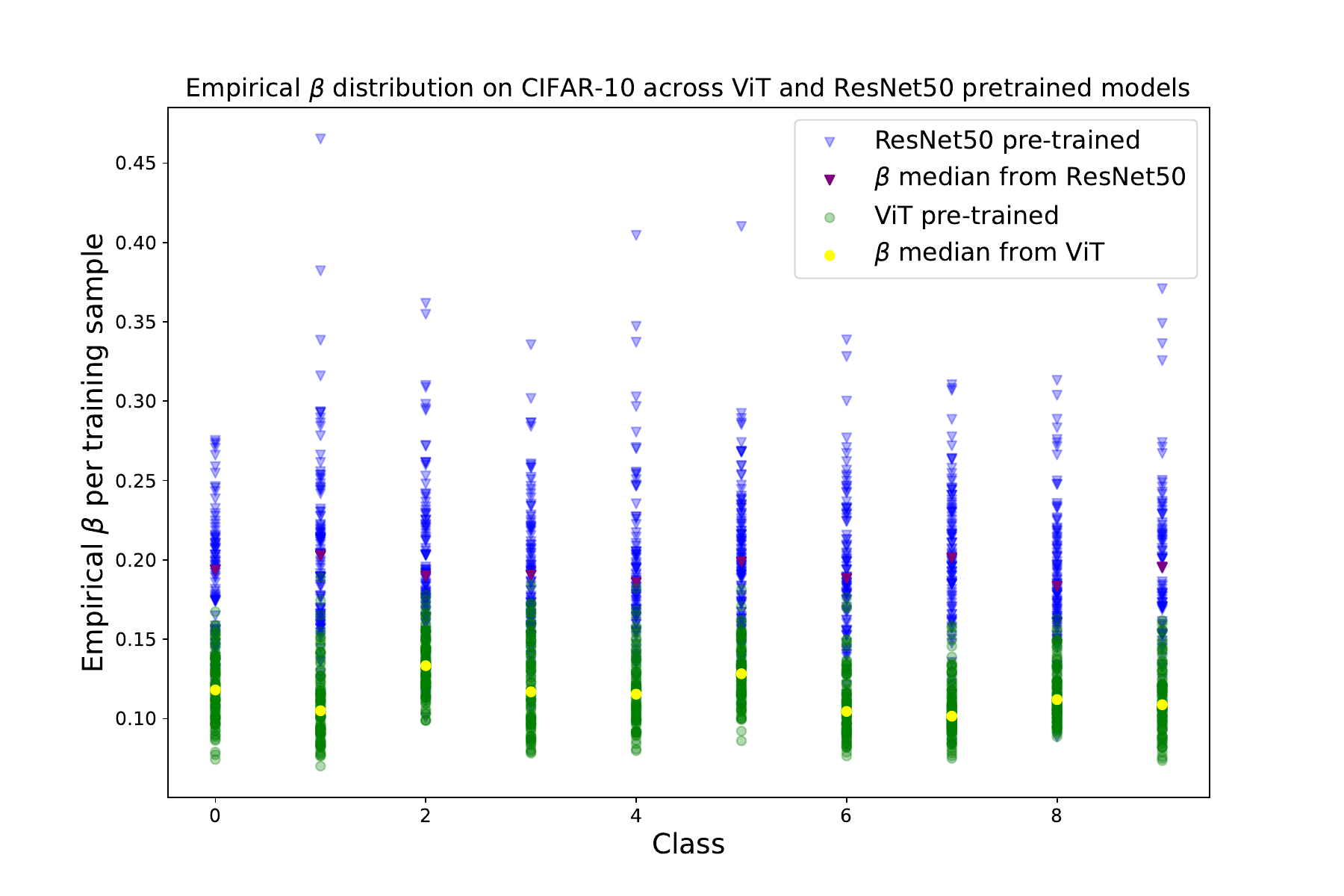}
  \caption{\textbf{CIFAR-10.} A figure depicting the feature shift parameter $\beta$ when fine-tuning different pre-trained models on CIFAR-10. As observed, ViT performs better than ResNet-50, as the shift parameter is much smaller. The feature shift vectors are quite stochastic.}
  \label{fig:nc_cifar10}
  \vspace{-1em}
\end{figure}

Moreover, we study the robustness of the  ``dimension-independence" property against various types of perturbations, including stochastic, adversarial, and offset perturbations. Each perturbation type will alter the feature presentations, increasing $\beta$ and potentially making the sample complexity bound 
dimension-dependent.  Notably, we find that the adversarial perturbations impose a stricter limitation on $\beta$, lowering the acceptable upper-bound threshold for $\beta$ from $p^{-\frac{1}{2}}$ to $p^{-1}$,  meaning that adversarial perturbations are more fragile compared to other types of perturbations.

Moving forward,  to mitigate the  ``non-robustness of dimension-independency" issue under perturbations, we propose solutions to enhance the robustness of NoisyGD. We find out that releasing the mean of feature embeddings effectively neutralizes offset perturbation (detailed in Section \ref{sec:normalization}), and dimension reduction techniques like PCA (detailed in Section \ref{sec: pca}) reduce the constraint on the feature shift parameter $\beta$, thus improve NoisyGD's robustness. Our theoretical results shed light on the recent success of private fine-tuning. Specifically, our analysis on PCA provides a plausible explanation for the effectiveness of employing DP-PCA in the first private deep learning work~\citep{abadi2016DPSGD}.

We summarize our contributions below:

 \begin{itemize}

 \item We present a direct analysis of the misclassification error (on population) under 0-1 loss.
 Existing analyses of the excess risk on population \citep{DBLP:conf/focs/BassilyST14, DBLP:conf/nips/BassilyFTT19,DBLP:conf/nips/BassilyFGT20,DBLP:conf/stoc/FeldmanKT20} mainly focus on the excess risk under convex surrogate loss, leading to sample complexity bounds (inverse) polynomial in the required excess risk.
 We show that Neural Collapse theory allows us to directly
 bound 0-1 loss, which results in a logarithmic sample complexity bound in the misclassification error $\gamma.$
 \item We introduce a feature shift parameter $\beta$ to measure the discrepancy between actual and ideal last-layer features. Our theoretical findings show that this feature structure is crucial to guarantee NoisyGD's resistance to dimension dependence. In particular, when $\beta \leq p^{-\frac{1}{2}}$, the sample complexity for NoisyGD becomes $O(\log(1/\gamma))$, which is dimension-independent. We also provide an empirical evaluation of this feature shift parameter $\beta$ on CIFAR-10 using ImageNet pre-trained vision transformer and ResNet-50, which shows that the vision transformer outperforms ResNet-50 as it leads to a smaller $\beta$.
 
 \item We theoretically analyze the misclassification error and robustness of NoisyGD against several types of perturbations and class-imbalanced scenarios, with sample complexity bounds summarized in Table~\ref{tab:complexity}.
 
\item We empirically validate our theoretical findings on private fine-tuning vision models. We show that privately fine-tuning an ImageNet pre-trained vision transformer is not affected by the last-layer dimension on CIFAR-10. However, we observe a degradation in the utility-dimension trade-off when a minor perturbation is  introduced, aligning with our theoretical results.
\item We propose two solutions to address the non-robustness challenges in DP fine-tuning with theoretical insights. In particular,  we empirically apply PCA on the perturbed last-layer feature before private fine-tuning,  demonstrating that the utility of NoisySGD remains  unaffected by the original feature dimension. We believe these results will provide new insights into enhancing the robustness of  private fine-tuning.

\end{itemize}

\begin{table}[htb]
\centering
\begin{tabular}{l|c|c}
\toprule
Setting & Assumption & Sample complexity\\
\hline
\hline
\multirow{3}{2cm}{Non-private learning with GD} & Perfect NC or $\beta$-approximate NC with $\beta^2p\leq 1$&  No.\ of classes    \\
                               & $\beta$-approximate NC & $p\beta^2 \log\frac{1}{\gamma}$   \\
                               & $\beta$-approximate NC (separability) & $p\beta^4 \log\frac{1}{\gamma}$    \\ 
            \hline
\multirow{7}{2cm}{Private learning ($\rho$-zCDP) with NoisyGD} & Perfect NC&  $\frac{2\sqrt{\log(1/\gamma)}}{\sqrt{\rho}}$ \\
                               & $\beta$-approximate NC & $p\beta^2 \log\frac{1}{\gamma} + \frac{\max\{p\beta^2, 1\} \sqrt{\log(1/\gamma)}}{\sqrt{2\rho}}$   \\
                               & $\beta$-approximate NC (separability) & $p\beta^4 \log\frac{1}{\gamma} + \frac{\max\{p\beta^2, 1\} \sqrt{\log(1/\gamma)}}{\sqrt{2\rho}}$   \\ 
                &$\widetilde{\beta}$-stochastic perturbation (test)& $\frac{\max\{\sqrt{p}\widetilde{\beta}, 1\} \sqrt{\log(1/\gamma)}}{\sqrt{2\rho}}$  \\
		&$\widetilde{\beta}$-adversarial perturbation (test) & $\frac{\max\{ p\widetilde{\beta}, 1\} \sqrt{\log(1/\gamma)}}{\sqrt{2\rho}}$ \\
	  & $\widetilde{\beta}$-offset perturbation (training)  &  $\frac{\max\{\sqrt{p}\widetilde{\beta}, 1\} \sqrt{\log(1/\gamma)}}{\sqrt{2\rho}}$  \\
	 & $\widetilde{\beta}$-offset perturbation + $\alpha$-Class imbalance &$\frac{\max\{\sqrt{p}\widetilde{\beta}, 1\} \sqrt{\log(1/\gamma)}}{(1 - \widetilde{\beta} + 2\widetilde{\beta}\alpha)\sqrt{2\rho}}$\\
            \bottomrule
\end{tabular}
\caption{Summary of the sample complexity of achieving a misclassification error $\gamma$ of private learning under $\rho$-zCDP.
We consider perfect features, actual features (GD and NoisyGD), and perturbed features (stochastic, adversarial, offset perturbations).
For the actual features, we assume that the feature shift vectors of traing and testing features are from the same distribution.
The separability assumption here is that all $p$ components of the feature shift vectors are independent.
We have also considered the effects of $\alpha$-class imbalance.}
 
\label{tab:complexity}
\end{table}

\section{Preliminaries and Problem Setup}
\label{sec:preliminary}

In this section, we review the private fine-tuning literature, introduce the Neural Collapse phenomenon, and formally describe the private fine-tuning problem under (approximate) Neural Collapse.

\paragraph{Symbols and notations.} Let the data space be $\cZ$. A dataset $\cD$ is a set of individual data points $\{z_1,z_2,...\}$ where $z_i\in\cZ$. Unless otherwise specified, the size of the data $|\cD|:=n$.
$\cZ =  \cX\times \cY$ where $\cX$ is the feature space and $\cY$ the label space with $\cX\subset \R^p$ and $\cY = \{1,2,...,K\}$. A data point $z_i$ is a feature-label pair $(x_i,y_i)$. Occasionally, we overload $y_i$ to also denote the one-hot representation of the label. We use standard probability notations, e.g., $\Pr[\cdot]$ and $\E[\cdot]$ for probabilities and expectations.  Other notations will be introduced when they first appear.

\paragraph{Differentially private learning.} The general setting of interest is \emph{differentially private learning} where the goal is to train a classifier while satisfying a mathematical definition of privacy known as differential privacy \citep{dwork2006differential}.  DP ensures that any individual training data point cannot be identified using the trained model and any additional side information. 

More formally, we adopt the popular modern variant called zero-centered Concentrated Differential Privacy (zCDP), as defined below.


\begin{definition}[Zero-Concentrated Differential Privacy, zCDP, \citet{DBLP:conf/tcc/BunS16}]
Two datasets $\cD_0, \cD_1$ are neighbors if they can be constructed from each other by adding or removing one data point. A randomized mechanism $\cA$ satisfies $\rho$-zero-concentrated differentially private ($\rho$-zCDP) if, for all neighboring datasets $\cD_0$ and $\cD_1$, we have
$
    R_{\alpha}(\mathcal{A}(\cD_0)\|\cA(\cD_1)) \leq \rho \alpha,
$
where $  R_{\alpha}(P\|Q) = \frac{1}{\alpha - 1} \log \int \left(\frac{p(x)}{q(x)} \right)^{\alpha} q(x) dx$ is the R\'enyi divergence between two distributions $P$ and $Q$.
\end{definition}
In the above definition, $\rho\geq 0$ is the privacy loss parameter that measures the strength of the protection. $\rho=0$ indicates perfect privacy,  $\rho = \infty$ means no protection at all. The privacy protection is considered sufficiently strong in practice if  $\rho$ is a small constant, e.g., $1,2,4,8$.   For readers familiar with the standard approximate DP but not zCDP, $\rho$-zCDP implies  $(\epsilon,\delta)$-DP for all $\delta >0$ with $\epsilon = \rho + 2\sqrt{\rho\log(1/\delta)}$.

The goal of differentially private learning is to come up with a differentially private ($\rho$-zCDP) algorithms that outputs a classifier $f:\cX \rightarrow \cY$ such that the misclassification error
$$
\mathrm{Err}(f) = \E_{(x,y)\sim \cP} \left[\mathbf{1}(f(x)\neq y)\right]
$$
is minimized (in expectation or with high probability), where $\cP$ is the data distribution under which the training data is sampled from i.i.d.

For reasons that will become clear soon, we will focus on linear classifiers parameterized by $W\in\R^{K\times p}$  of the form
$$f_W(x) = \argmax_{y\in[K]} [W x]_y.$$

\paragraph{Noisy gradient descent.}
Noisy Gradient Descent or its stochastic version Noisy Stochastic Gradient Descent  \citep{song2013stochastic,abadi2016DPSGD} is a fundamental algorithm in DP deep learning. 
To minimize the loss function $\cL(\theta) := \sum_{i=1}^n \ell(\theta, z_i)$, the NoisyGD algorithm updates the model parameter $\theta_t$ by combining the gradient with an isotropic Gaussian.
\begin{align}
\label{eq:DP-GD}
 \theta_{t+1} = \theta_t - \eta_t \left(\sum_{i=1}^n \nabla \ell(\theta_t, z_i) + \cN\left(0, \frac{G^2}{2\rho} I_p\right) \right).
 \end{align}
Here $G$ is the $\ell_2$-sensitivity of the gradient, and the algorithm runs for $T$ iterations that satisfy $T\rho$-zCDP.

However, the excess risk on population of NoisySGD must grow as $\sqrt{p}/\epsilon$ \citep{DBLP:conf/focs/BassilyST14,DBLP:conf/nips/BassilyFTT19}, which limits private deep learning benefit from model scales. To overcome this, DP fine-tuning \citep{de2022unlocking, li2021large, bu2022differentially} is emerging as a promising approach to train large models with privacy guarantee.

\paragraph{Private fine-tuning.}
In DP fine-tuning, we begin by pre-training a model on a public dataset and then privately fine-tuning the pre-trained model on the private dataset. Our focus is on fine-tuning the last layer of pre-trained models using the NoisySGD/NoisyGD algorithm, which has consistently achieved state-of-the-art results across both vision and language classification tasks~\citep{de2022unlocking, tramer2020differentially, bao2023dpmix}.
A recent study \citep{DBLP:conf/nips/TangPSM23} shows that DP fine-tuning only the last layer can achieve higher accuracy than fine-tuning all layers.
However, we acknowledge that in some scenarios, fine-tuning all layers under DP can result in better performance, as demonstrated in the CIFAR-10 task by \citet{de2022unlocking}. The comprehensive analysis of dimension-dependence in other private fine-tuning benchmarks remains an area for future investigation.

\paragraph{Theoretical setup for private fine-tuning.}
For a $K$-class classification task, we rewrite each data point $z$ as $z= (x,y)$ with $x\in\mathbb{R}^p$ being the feature and $y = (y_1,\cdots, y_K)\in\{0,1\}^K$ being the corresponding label generated by the one-hot encoding, that is $y$ belongs to the $k$-th class if $y_k = 1$ and $y_j = 0$ for $j\neq k.$

When applying NoisyGD for fine-tuning the last-layer parameters, the model is in a linear form. 
Thus, we consider the linear model $f_{W}(x) = W x$ with $W\in\mathbb{R}^{K\times p}$ being the last-layer parameter to be  trained and $x$ is the last layer feature of a data point.
The parameter $\theta_t$ in NoisyGD is the vectorization of $W$.
Let $\ell:\mathbb{R}^K \times \mathbb{R}^K\rightarrow \mathbb{R}$ be a loss function that maps $f_{W}(x) \in \mathbb{R}^K$ and the label $y$ to $\ell(f_W(x), y).$
For example, for the cross-entropy loss, we have $\ell(f_W(x), y) = -\sum_{i=1}^K y_i\log[(f_W(x))_i].$

\paragraph{Misclassification error.}
For an output $\widehat{W}$ and a testing data point $(x,y)$, the misclassification error we considered is defined as $\Pr\left[y\neq f_{\widehat{W}}(x)\right],$ where the probability is taken with respect to the randomness of $\widehat{W}$ and $(x,y)\sim\mathcal{P}.$

\paragraph{Beyond the distribution-free theory.} 
Distribution-free learning with differential privacy is however known to be statistically intractable even for linear classification in $1$-dimension \citep{chaudhuri2011sample,bun2015differentially,MR3567451}.  Existing work resorts to either proving results about (convex and Lipschitz) surrogate losses \citep{DBLP:conf/focs/BassilyST14} or making assumptions on the data distribution \citep{chaudhuri2011sample,DBLP:conf/focs/BunLM20}. 
For example, \citet{chaudhuri2011sample} assumes bounded density, and \citet{DBLP:conf/focs/BunLM20} shows that linear classifiers are privately learnable if the distribution satisfies a large-margin condition.
Our setting, as detailed in Section \ref{sec:random-perturb-both}, can be viewed as a \emph{new family of distributional assumptions} motivated by the recent discovery of the Neural Collapse phenomenon. As we will see, these assumptions not only make private learning statistically and computationally tractable (using NoisyGD), but also produce sample complexity bounds that are \emph{dimension-free} and \emph{exponentially faster} than existing results that are applicable to our setting.

\begin{definition}[Sample complexity for private $\mathfrak{D}$-learnability]
\label{defn:sample-complexity}
For a set of distributions $\mathfrak{D}$, the sample complexity of private $\mathfrak{D}$-learning a hypothesis class $\mathcal{H}$ under $\rho$-zCDP is defined to be
$$
n_{\mathcal{H},\mathfrak{D},\rho}(\gamma) = \min\left\{ n\in \mathbb{N}  \;\middle| \;\inf_{\mathcal{A} \text{ satisfies }\rho-\text{zCDP}} \sup_{\cP\in\mathfrak{D}}  \E_{\mathcal{A}, \text{data}\sim \cP^n}\left[ \mathrm{Err}_{\cP}(\cA(\text{data})) - \inf_{h\in\mathcal{H}} \mathrm{Err}_{\cP}(h)\right] \leq  \gamma\right\}
$$
i.e., the smallest integer such that the minimax excess risk is smaller than $\gamma$.
\end{definition}
\noindent
We say $\mathfrak{D}$ is learnable under $\rho$-zCDP if $n_{\mathcal{H},\mathfrak{D},\rho}$ is finite, meaning there exists an algorithm $\mathcal{A}$ such that the expected excess risk is smaller than $\gamma$ with finitely many training data.
It is obvious that the misclassification error defined above can be written as $\E_{\mathcal{A}, \text{data}\sim \cP^n}\left[ \mathrm{Err}_{\cP}(\cA(\text{data}))\right]$ as the randomness of $\widehat{W}$ comes from the randomized algorithm $\mathcal{A}$ and the training data.
Since the second term $\inf_{h\in\mathcal{H}} \mathrm{Err}_{\cP}(h)$, which is the misclassification error of the optimal classifier, is non-negative, the misclassification error is always larger than the excess risk. Thus, in Section \ref{sec:theory}, we investigate the convergence of the misclassification error, which implies an upper bound on the excess risk as well.
We will focus on linear classifiers, due to fine-tuning the last layer, and study distribution families $\mathfrak{D}$ inspired by Neural Collapse.


\paragraph{Neural Collapse.}
Neural Collapse~\citep{MR4250189, fang2021exploring,doi:10.1073/pnas.2221704120} describes a phenomenon about the last-layer feature structure obtained when a deep classifier neural network converges. It  demonstrates that the last-layer feature converges to the column of an equiangular tight frame (ETF). 
Mathematically, an ETF is a matrix 
\begin{align}
\label{eq:defn-etf}
    M &= \sqrt{\frac{K}{K-1}} P \left(I_K - \frac{1}{K}\mathbf{1}_K \mathbf{1}_K^T\right) \in\mathbb{R}^{p\times K},
\end{align}
where $P=[P_1,\cdots, P_K]\in\mathbb{R}^{p\times K}$ is a partial orthogonal matrix such that $P^T P = I_K$. For a given dimension $d=p$ or $K$, we denote $I_d\in\mathbb{R}^d$ the identity matrix and denote $\mathbf{1}_d=[1,\cdots, 1]^T\in\mathbb{R}^d$.
Rewrite $M = [M_1,\cdots, M_K]$ with $M_k$ being the $k$-th column of $M$, that is,  the ideal feature of the data belonging to class $k$.


We adopt the Neural Collapse theory to describe an ideal feature representation of applying the pre-trained model on the private set.  However, achieving perfect collapse on the private set is an ambitious assumption, as in practice, the private feature of a given class is distributed around $M_k$. Therefore, we introduce a feature shift parameter $\beta$ to measure the discrepancy between the actual feature and the perfect feature $M_k$.


\begin{definition}[Feature shift parameter $\beta$] For any $1\leq k \leq K$, given a feature $x$ belonging to the $k$-th class and the perfect feature $M_k$,  we define $\beta =  ||x- M_k ||_{\infty}$ as the feature shift parameter of $x$ that measures the $\ell_{\infty}$ distance between $x$ and $M_k$.
\end{definition}

Here, we use the $\ell_\infty$ norm since it is related to adversarial attacks, which are important in our study of the robustness of NoisyGD. Our numerical results in Figure \ref{fig:nc_cifar10} show that $\beta$ is bounded on CIFAR-10 if the pretrained model is the vision transformer or ResNet-50. 

\section{Bounds on misclassification errors and robustness in  private fine-tuning}
\label{sec:theory}

In this section, we establish  bounds on the misclassification error for both GD and the NoisyGD . 

Section \ref{sec:fixedperturb} aims to delineate the connection between the feature shift parameter $\beta$ and the misclassification error. Additionally, we derive a threshold for $\beta$ below which the misclassification error is dimension-independent.

In Section \ref{sec:robustness}, our focus is the robustness of private fine-tuning. Specifically, we elucidate how various perturbations impact both $\beta$ and the misclassification error.

\subsection{Bounds on misclassification errors}
\label{sec:fixedperturb}
We consider a binary classification problem with a training set $\{(x_i, y_i)\}_{i=1}^n$, where $x_i$ represents features and $y_i \in \{\pm 1\}$ are the labels.
For the broader multi-class scenarios, we state our theory for the perfect case in Section \ref{sec:perfect-collapse}.
The rest theory can be extended to the multi-class case similarly.

For binary classification problems, the ETF $M = [M_1,M_2]$ satisfies $M_1 = -M_2$, which is equivalent to $M_1 = e_1$ and $M_2 = -e_1$ up to some rotation map. In fact, for this binary classification problem, we have $M_1 = \frac{P_1 - P_2}{\sqrt{2}}$, according to the definition of an ETF in Eq.\ \ref{eq:defn-etf}. Similarly, it holds that $M_2 = \frac{P_2 - P_1}{\sqrt{2}}$. Thus, we have $M_1 = -M_2$ and $\|M_1\|_2 = \|M_2\|_2 = 1$. Since our theory depends only on the norm of the feature means $M_1$ and $M_2$, and the angle between $M_1$ and $M_2$, while the rotation matrix $P$ will not change the misclassification error, without loss of generality, we assume that $M_1=e_1$ and $M_2=-e_1$. For a data point $(x,y)$ with $y=1$, recall the feature shift parameter $\beta = \|x - e_1\|_{\infty}$, which is the infinity norm of $v = x - e_1$. We call $v$ a feature shift vector since it is the difference between an actual feature and the perfect one. Similarly, if $y = -1$, the feature shift vector is $v = x + e_1$. For a training set $\{(x_i,y_i)\}_{i=1}^n$, let $\{v_i = x_i - \mathrm{perfect \, feature}\}_{i=1}^n$ be a sequence of feature shift vectors.

We first consider the scenario where the shift vectors $v_i$'s are i.i.d.\ copies of a symmetric centered random vector $v$ with $\|v\|_\infty \leq \beta$ in Section \ref{sec:random-perturb-both}.
This setting is quite practical as can be seen from Figure \ref{fig:nc_example}.
Notably, the law of large numbers ensures that $\frac{1}{n}\sum_{i=1}^n v_i$ converges to zero. Consequently, the mean of the features approaches the ideal feature, in line with the principles of the Neural Collapse theory.
Moreover, in our theory for the softmargin case with $\beta^2p >1$, we need further assumption to make the model linearly separable. We assume that all $p$ components of $v$ are independent of each other.

Furthermore, acknowledging that at times, the feature may be influenced by a fixed vector, such as an offset shift vector which will not change the margin and angles between features, we also investigated the case where $v_i$ is deterministic in Section \ref{sec:fix-perturb-both}.


\subsubsection{Stochastic shift vectors}
\label{sec:random-perturb-both}

For conciseness, we mainly focus on the results for 1-iteration NoisyGD, which is sufficient to ensure the convergence.
As presented in Theorem \ref{thm:multi-projected-GD}, our theory can be extended to multi-iteration projected NoisyGD.
However, the dimension dependency  can not be mitigated using multiple iterations.
The proofs of all results in this section are given in Appendix \ref{sec:proof-perturb}.

For 1-iteration GD without DP guarantee, the output is 
$
    \widehat{\theta}_{\mathrm{GD}} = \eta \sum_{i=1}^n y_i x_i.
$
Moreover, the 1-iteration NoisyGD outputs 
$  \widehat{\theta}_{\mathrm{NoisyGD}} = \eta \sum_{i=1}^n y_ix_i + \mathcal{N}(0,\sigma^2). 
$
For the private learning problem,  
the sensitivity of the gradient is $G = \sup_{x_i}\|x_i\|_2 = \sqrt{1 + \beta^2 p},$ which is dimension dependent.
If we still want $G$ to be dimension independent, then every data point needs to be shrunk to $(e_1 + v) / \sqrt{1 + \|v\|_2^2}.$ In both cases, the error bounds remain the same.
In the testing procedure, we consider a testing point $(x,y)\sim\cP.$

For an estimate $\widehat{\theta}$ whose randomness is from a training dataset drawn independently from $\mathcal{P}$ and a randomized algorithm $\mathcal{A}$, the misclassification error $\E_{\mathcal{A},\text{data}\sim \cP^n}\left[ \mathrm{Err}_{\cP}(\cA(\text{data}))\right]$ in Definition \ref{defn:sample-complexity} can be rewritten as
\begin{align*}
    \E_{\mathcal{A},\text{data}\sim \cP^n}\left[ \mathrm{Err}_{\cP}(\cA(\text{data}))\right] = \E_{\widehat{\theta}}\E_{(x,y)\sim \mathcal{P}}\left[\mathbf{1}(y\neq \mathrm{sign}(\widehat{\theta}^T x))\right] = \Pr\left[y \widehat{\theta}^T x <0\right],
\end{align*}
where the probability is taken with respect to the randomness of both $(x,y)\sim\cP$ and $\widehat{\theta}$.

\begin{theorem}[misclassification error for GD]
\label{thm:GD-random perturbation}
   Let $\widehat{\theta}_{\mathrm{GD}}$ be a predictor trained by GD under the cross entropy loss with zero initialization.
Then, we have the following error bound on the misclassification error.
\begin{itemize}
    \item   If we assume that $\beta^2p \leq 1,$ then it holds
$
    \Pr[y\widehat{\theta}_{\mathrm{GD}}^T x < 0]  =0
$
for $n$ greater than the number of classes.
As a result, to achieve a misclassification error $\gamma$, the sample complexity is constant.
\item 
In general, it holds
   \begin{align*}
    \Pr[y\widehat{\theta}_{\mathrm{GD}}^T x < 0]  \leq \exp\left(-\frac{n}{2\left(\beta^4p^2 + \frac{1}{3}\beta^2p\right)}\right).
\end{align*}
Therefore, to achieve a misclassification error  $\gamma$, the sample complexity is $O\left(p\beta^2\log(1/\gamma)\right).$ 
If we further assume that all $p$ components of $v$ are independent of each other, then, it holds
\begin{align*}
    \Pr[y\widehat{\theta}_{\mathrm{GD}}^T x < 0]  \leq \exp\left(-\frac{n}{2\left(\beta^4p + \frac{1}{3}\beta^2\right)}\right).
\end{align*}
Thus, to achieve a misclassification error $\gamma$, the sample complexity is $O\left(p\beta^4\log(1/\gamma)\right).$ 
\end{itemize}
\end{theorem}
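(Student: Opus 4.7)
The plan is to exploit the explicit closed form of the one-step GD iterate with zero initialization, express $y\widehat{\theta}_{\mathrm{GD}}^\top x$ as signal plus noise, and then handle the three claims separately: the small-$\beta$ regime by a deterministic worst-case bound, the general case by a conditional Bernstein bound, and the separable case by sharpening the variance estimate using coordinate independence. First I would compute that for cross-entropy with $\theta_0=0$, $\nabla_\theta\ell(0,(x_i,y_i))\propto -y_ix_i$, so (absorbing constants into $\eta$) $\widehat{\theta}_{\mathrm{GD}}=\eta\sum_{i=1}^n y_i x_i$. Substituting $x_i=y_ie_1+v_i$ gives $\widehat{\theta}_{\mathrm{GD}}=\eta(ne_1+S)$ with $S:=\sum_i y_iv_i$, and for a test point $x=ye_1+v$ expanding yields
$$y\widehat{\theta}_{\mathrm{GD}}^\top x \;=\; \eta\bigl[n(1+yv_1)+(e_1+yv)^\top S\bigr].$$
Controlling the misclassification probability thus reduces to controlling whether the noise $(e_1+yv)^\top S$ can overwhelm the signal $n(1+yv_1)\ge n(1-\beta)$.

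For the small-$\beta$ regime ($\beta^2p\le 1$) I would argue deterministically. The $\ell_\infty$ bound $\|v_i\|_\infty\le\beta$ gives $|S_j|\le n\beta$ for every coordinate, so $\widehat{\theta}_1\ge \eta n(1-\beta)$ while $\sum_{j\ge 2}|\widehat{\theta}_j|\le \eta n\beta(p-1)$. Taking the adversarial worst case over the test shift with $\|v\|_\infty\le\beta$ lower-bounds $\widehat{\theta}^\top(yx)\ge \widehat{\theta}_1(1-\beta)-\beta\sum_{j\ge 2}|\widehat{\theta}_j|\ge \eta n\bigl[(1-\beta)^2-\beta^2(p-1)\bigr]$, which is strictly positive under $\beta^2 p\le 1$ (for $\beta$ below a small constant). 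This gives the claimed deterministic zero-error bound as soon as at least one sample per class is drawn.

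For the general case I would condition on the test shift $v$ and use symmetry of $v_i$ (so that $y_iv_i\stackrel{d}{=}v_i$ are i.i.d.\ symmetric centered vectors bounded in $\ell_\infty$ by $\beta$) to write $(e_1+yv)^\top S=\sum_i(e_1+yv)^\top (y_iv_i)$ as a sum of $n$ i.i.d.\ zero-mean terms. Each term has almost-sure bound $\|e_1+yv\|_2\cdot\beta\sqrt{p}\le \sqrt{(1+\beta^2p)\beta^2p}$ and variance $(e_1+yv)^\top\Sigma(e_1+yv)\le \|\Sigma\|_{\mathrm{op}}\|e_1+yv\|_2^2\le p\beta^2(1+\beta^2p)$, where $\Sigma=\mathrm{Cov}(v_i)$. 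Applying Bernstein's inequality to $\Pr[(e_1+yv)^\top S<-n(1-\beta)\mid v]$ and simplifying the denominator to leading order in $\beta^2p$ produces the exponent $n/\bigl[2(\beta^4p^2+\beta^2p/3)\bigr]$ uniformly in $v$; integrating over $v$ preserves the bound, and inverting it gives the sample complexity $O(p\beta^2\log(1/\gamma))$. For the separable sub-case, coordinate independence makes $\Sigma$ diagonal with entries $\le\beta^2$, which sharpens $\|\Sigma\|_{\mathrm{op}}$ from $p\beta^2$ to $\beta^2$, replacing the $\beta^4p^2$ variance term by $\beta^4p$ and yielding the improved complexity $O(p\beta^4\log(1/\gamma))$.

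The hardest part will be cleanly decoupling the test randomness from the training randomness: because $v$ and $S$ enter multiplicatively through $v^\top S$, a naive application of Bernstein to the unconditional sum gives much looser constants. The careful step is to condition on $v$, apply Bernstein to $(e_1+yv)^\top S$ with $v$-dependent variance and range, and then remove the $v$-dependence using the uniform bound $\|v\|_\infty\le\beta$, while choosing the signal threshold to be $n(1-\beta)$ rather than $n$ so that the test-side shift $nyv_1$ is absorbed into the deterministic margin. Bookkeeping the choice of norm (e.g.\ bounding $\|e_1+yv\|_2^2\le 1+\beta^2p$ rather than using $\ell_1$-based bounds) and the exact placement of the factor $1/3$ in the Bernstein denominator is where the stated constants in $\beta^4p^2+\beta^2p/3$ and $\beta^4p+\beta^2/3$ emerge.
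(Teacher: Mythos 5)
Your overall route is the same as the paper's: write the one-step GD iterate in closed form, decompose $y\widehat{\theta}^\top x$ into a deterministic signal plus a noise term, and then apply a conditional Bernstein bound (conditioning on the test shift $v$ to avoid the multiplicative coupling you correctly flag as the main difficulty). The choice of norm, the appearance of the $\tfrac{1}{3}$ in the Bernstein denominator, and the way coordinate independence sharpens the variance from $\beta^4 p^2$ to $\beta^4 p$ are all in the right places.

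However, there is a genuine gap, and it is exactly where you hedge. The paper first reduces (without loss of generality, by an orthogonal decomposition and rescaling $y_i x_i = c(e_1 + v_i)$) to the case $v_i^\top e_1 = 0$ and $v^\top e_1 = 0$, i.e.\ the shift vectors live in the orthogonal complement of the signal direction. This is not cosmetic: with that normalization the test statistic becomes exactly $n + \sum_{i=1}^n v_i^\top v$, so the deterministic lower bound is $n(1-\beta^2 p)$, which is nonnegative precisely when $\beta^2 p \le 1$. Your version keeps the $e_1$-components of $v_i$ and $v$, and the margin you obtain is $\eta n\bigl[(1-\beta)^2 - \beta^2(p-1)\bigr] = \eta n\bigl[1 - 2\beta + 2\beta^2 - \beta^2 p\bigr]$. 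Under $\beta^2 p \le 1$ this is bounded below only by $-2\beta(1-\beta)$, which is negative for any $\beta\in(0,1)$; positivity requires the strictly stronger condition $p < \beta^{-2} - 2\beta^{-1} + 2$. So the first bullet of the theorem is not recovered as stated, and your phrase ``for $\beta$ below a small constant'' is a weakening of the claim, not a proof of it. The missing ingredient is precisely that normalization step.

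The same omission is why your Bernstein constants come out as $p\beta^2(1+\beta^2 p)$ and $\sqrt{(1+\beta^2 p)\beta^2 p}$ rather than the paper's $\beta^4 p^2$ and $\beta^2 p$. After the normalization, one needs only $|v_i^\top v| \le \beta^2 p$ and $\mathbb{E}[(v_i^\top v)^2\mid v] = v^\top \Sigma\, v \le \beta^4 p^2$, not the larger $\lVert e_1 + y v\rVert_2^2$-based bounds. Your remark about ``simplifying the denominator to leading order'' would then be unnecessary: with the normalization the claimed exponents $n/\bigl[2(\beta^4 p^2 + \beta^2 p/3)\bigr]$ and $n/\bigl[2(\beta^4 p + \beta^2/3)\bigr]$ drop out exactly. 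So the fix is to add the reduction to $v_i^\top e_1 = v^\top e_1 = 0$ up front; everything else in your plan then goes through cleanly.
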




\begin{theorem}[misclassification error for NoisyGD]
\label{thm:NoisyGD-random perturbation}
   Let $\widehat{\theta}_{\mathrm{NoisyGD}}$ be a predictor trained by NoisyGD under the cross entropy loss with zero initialization.
   Then, we have the following error bound on the misclassification error.
\begin{align}
\label{eq:Noisy-bound}
\Pr\left[y\widehat{\theta}_{\mathrm{NoisyGD}}^T x < 0\right] \leq  \exp\left(-\frac{n^2\rho}{2(1 + \beta^2p)^2} \right) + \exp\left(-\frac{n}{8\left(\beta^4p^2 + \frac{1}{3}\beta^2p\right)}\right).
\end{align}
As a result, to achieve a misclassification error $\gamma$, the sample complexity is $O\left(\frac{(1 + \beta^2p)^2\sqrt{\log\frac{1}{\gamma}}}{2\rho} + p\beta^2\log(1/\gamma) \right).$ 
If we further assume that all $p$ components of $v$ are independent of each other, then, it holds
\begin{align*}
   &\Pr\left[y\widehat{\theta}_{\mathrm{NoisyGD}}^T x < 0\right] \leq  \exp\left(-\frac{n^2\rho}{2(1 + \beta^2p)^2} \right) + \exp\left(-\frac{n}{8\left(\beta^4p + \frac{1}{3}\beta^2\right)}\right).
\end{align*}
To achieve a misclassification error $\gamma$, the sample complexity is 
$
    O\left( \frac{(1 + \beta^2p)^2\sqrt{\log\frac{1}{\gamma}}}{2\rho} + 4p\beta^4\log\frac{1}{\gamma} \right).$
\end{theorem}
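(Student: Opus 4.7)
The plan is to decompose the NoisyGD output as $\widehat{\theta}_{\mathrm{NoisyGD}} = \widehat{\theta}_{\mathrm{GD}} + \xi$ where $\xi \sim \mathcal{N}(0,\sigma^2 I_p)$ with $\sigma^2 = (1+\beta^2 p)/(2\rho)$, since the per-example gradient sensitivity is $G = \sup_i \|x_i\|_2 = \sqrt{1+\beta^2 p}$. For any threshold $T > 0$, a union bound yields
\begin{align*}
\Pr\!\left[y\widehat{\theta}_{\mathrm{NoisyGD}}^\top x < 0\right] \leq \Pr\!\left[y\widehat{\theta}_{\mathrm{GD}}^\top x \leq T\right] + \Pr\!\left[y\xi^\top x < -T\right].
\end{align*}
I would pick $T = \eta n/2$, motivated by the fact (used inside the proof of Theorem~\ref{thm:GD-random perturbation}) that $y\widehat{\theta}_{\mathrm{GD}}^\top x$ concentrates around $\eta n$.

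For the noise term, condition on the test feature $x$: since $y\xi^\top x \mid x \sim \mathcal{N}(0,\sigma^2\|x\|_2^2)$ with $\|x\|_2^2 = 1 + \|v\|_2^2 \leq 1+\beta^2 p$ almost surely, the standard Gaussian tail bound yields
\begin{align*}
\Pr\!\left[y\xi^\top x < -T\right] \leq \exp\!\left(-\frac{T^2}{2\sigma^2(1+\beta^2 p)}\right) \leq \exp\!\left(-\frac{n^2\rho}{2(1+\beta^2 p)^2}\right),
\end{align*}
which reproduces the first exponential of \eqref{eq:Noisy-bound} up to absolute constants (absorbable into the choice of $T$).

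The GD term requires a mild strengthening of Theorem~\ref{thm:GD-random perturbation}: instead of bounding the probability that $y\widehat{\theta}_{\mathrm{GD}}^\top x$ is negative, I would bound the probability that it falls below a constant fraction of its mean. Expanding with $x_i = y_i e_1 + v_i$ and $x = y e_1 + v$ gives $y y_i x_i^\top x = 1 + y v_1 + y_i v_{i,1} + y y_i v_i^\top v$, so $y\widehat{\theta}_{\mathrm{GD}}^\top x = \eta\sum_i(1 + y v_1 + y_i v_{i,1} + y y_i v_i^\top v)$ has conditional mean at least $\eta n(1-\beta)$ given $v$, each summand is bounded in magnitude by $1+\beta^2 p$, and the dominant variance contribution is the bilinear cross-term $y y_i v_i^\top v$. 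Bounding $\E[(v_i^\top v)^2\mid v] \leq \|v\|_2^2 \cdot \E[\|v_i\|_2^2] \leq \beta^4 p^2$ in general and $\E[(v_i^\top v)^2\mid v] \leq \beta^2\|v\|_2^2 \leq \beta^4 p$ when the coordinates of $v_i$ are independent, Bernstein's inequality with deviation $\eta n/2$ delivers exactly the second exponential in \eqref{eq:Noisy-bound} and its sharpened $\beta^4 p$ variant.

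Finally, the sample complexity would follow by setting each of the two exponentials equal to $\gamma/2$ and solving for $n$: the Gaussian term forces $n \gtrsim (1+\beta^2 p)\sqrt{\log(1/\gamma)/\rho}$ while the Bernstein term forces $n \gtrsim p\beta^2\log(1/\gamma)$ (respectively $p\beta^4\log(1/\gamma)$ under independence), and taking the sum yields the claimed bound. The main technical subtlety I anticipate is the bilinear cross-term $\sum_i y_i v_i^\top v$, whose second moment acquires an extra factor of $p$ without independent coordinates; the cleanest route seems to be conditioning on the test shift $v$ and applying Bernstein to the resulting linear form in the $v_i$'s, using the deterministic uniform bound $|v_i^\top v|\leq \|v_i\|_2\|v\|_2 \leq \beta^2 p$.
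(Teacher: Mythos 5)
Your proposal matches the paper's own proof essentially exactly: the paper also splits on the threshold $\eta n/2$ (phrased via the law of total probability over the event $\{\sum_i v_i^\top v \le -n/2\}$ rather than a direct union bound on the decomposition $\widehat{\theta}_{\mathrm{NoisyGD}}=\widehat{\theta}_{\mathrm{GD}}+\xi$), applies a sub-Gaussian tail to the noise term with $\|x\|_2^2\le 1+\beta^2p$, and applies a Bernstein-type inequality to $\sum_i v_i^\top v$ with precisely the second-moment bounds $\beta^4p^2$ (general) and $\beta^4p$ (coordinate-wise independence) that you identify. The only cosmetic difference is that the paper invokes a WLOG reduction at the top of Appendix~\ref{sec:proof-perturb} setting $v_i^\top e_1 = v^\top e_1 = 0$, which eliminates your cross terms $yv_1 + y_i v_{i,1}$ and makes the conditional mean of $y\widehat{\theta}_{\mathrm{GD}}^\top x$ exactly $\eta n$; retaining those terms, as you do, quietly requires $\beta<1/2$ for the deviation to $\eta n/2$ to stay on the correct side of the mean, which is harmless in the regime $\beta\lesssim p^{-1/2}$ but is worth stating.
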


\paragraph{Remark.}
Note that with further assumptions on feature separability, the second term in Equation \ref{eq:Noisy-bound} (which aligns with GD in Theorem \ref{thm:GD-random perturbation}) can be improved from $\beta^2p$ to $\beta^4p$. However, the first term, caused by DP,  remains unchanged by this assumption. Thus, NoisyGD has a stricter requirement on feature quality due to the added random noise.
Theorems \ref{thm:GD-random perturbation} and \ref{thm:NoisyGD-random perturbation} indicate that the error bound is exponentially close to 0 under the following conditions:  $\beta\leq p^{-1/2}$ for both NoisyGD and GD and $\beta \leq p^{-1/4}$ for GD under stronger assumptions. This result is dimension-independent when $\beta$ satisfies the above conditions.
Moreover,  GD has robustness against larger shift vectors compared to NoisyGD.
This aligns with the observations from our experiments detailed in Section \ref{sec:experiments}, where we note a significant decrease in accuracy with increasing dimensionality.
In addition, when $\beta \leq p^{-1/2}$, the misclassification error for GD is always $0$ while that for NoisyGD is $\exp\left( -\frac{n\rho}{1 + \beta^2p}\right)$.

\paragraph{Promising properties for perfect collapse.}
In the special case $\beta=0,$ all features are equivalent to the perfect feature. For this perfect scenario, numerous promising properties are outlined as follows. The details are discussed in Section \ref{sec:perfect-collapse}.

\begin{enumerate}
	\item The error bound is exponentially close to $0$ if $\rho\gg G^2/n^2$ --- very strong privacy and very strong utility at the same time.
\item The result is dimension independent --- it doesn't depend on the dimension $p$.
\item The result is robust to class imbalance for binary classification tasks.
	\item The result is independent of the shape of the loss functions. Logistic loss works, while square losses also works. 
	\item The result does not require careful choice of learning rate. Any learning rate works equally well.
\end{enumerate}

\noindent\paragraph{Neural collapse in domain adaptation:} 
In many private fine-tuning scenarios, the model is initially pre-trained on an extensive dataset with thousands of classes (e.g., ImageNet), and is subsequently fine-tuned for a downstream task with a smaller number of classes. Our theory for the perfect case can be extended to the domain adaptation context, as detailed in Appendix \ref{proof:error-zero-
init-small}.

\subsubsection{Deterministic shift vectors}
\label{sec:fix-perturb-both}
In this section, we consider the case where each $v_i$ is a fixed vector with $\|v_i\|_\infty \leq \beta.$
Recall that the 1-iteration NoisyGD outputs 
$  \widehat{\theta}_{\mathrm{NoisyGD}} =\eta \left(n e_1 + \sum_{i=1}^n v_i\right) + \mathcal{N}(0,\sigma^2).
$
As discussed, when the feature is deterministic without assumptions on the distribution, the dataset is linearly separable only when $\beta^2p$ is less than 1.


\begin{theorem}[misclassification error for NoisyGD]
\label{thm:fixed-perturbation}
   Let $\widehat{\theta}_{\mathrm{NoisyGD}}$ be a predictor trained by NoisyGD under the cross entropy loss with zero initialization.
   Then, for $\beta$ such that $1 - \beta^2 p >0$, we have the following error bound on the misclassification error.
\begin{align*}
    \Pr[y\widehat{\theta}_{\mathrm{NoisyGD}}^T x < 0]  \leq \exp\left(-\frac{n^2(1 - \beta^2p)^2}{(1 + \beta^2p)\sigma^2} \right).
\end{align*}
As a result, to make the misclassification error less than $\gamma$, the sample complexity for $n$ is $O\left( \frac{(1 + \beta^2p)\log\frac{1}{\gamma}}{2\rho (1 - \beta^2p)^2} \right).$ 
\end{theorem}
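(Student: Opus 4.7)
The plan is to exploit the fact that in this deterministic-shift setting, $\widehat{\theta}_{\mathrm{NoisyGD}} = \eta(n e_1 + \sum_{i=1}^n v_i) + \mathcal{N}(0,\sigma^2 I_p)$ is a Gaussian vector whose mean is a fixed deterministic signal. Consequently, for any test pair $(x,y)$ with $x = y e_1 + v$ and $\|v\|_\infty \leq \beta$, the scalar $y\widehat{\theta}_{\mathrm{NoisyGD}}^{\T} x$ is a univariate Gaussian, so the misclassification event $\{y\widehat{\theta}^{\T} x < 0\}$ can be controlled by a single application of the Gaussian tail bound $\Pr[\mathcal{N}(\mu,\tau^2) < 0] \leq \exp(-\mu^2/(2\tau^2))$ whenever $\mu > 0$. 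The task therefore reduces to (i) a lower bound on the mean and (ii) an upper bound on the variance.

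For the mean, I would expand
\[
y\widehat{\theta}^{\T} x \;=\; y\bigl(n e_1 + \textstyle\sum_i v_i\bigr)^{\T}(y e_1 + v) \;+\; y\, g^{\T}(y e_1 + v),
\]
where $g \sim \mathcal{N}(0,\sigma^2 I_p)$, and bound each piece of the deterministic part using $\|v_i\|_\infty, \|v\|_\infty \leq \beta$. The dominant contribution is $n \cdot e_1^{\T} e_1 = n$, while the bilinear term $\sum_i v_i^{\T} v$ is controlled by Hölder's inequality as $|v_i^{\T} v| \leq \|v_i\|_\infty \|v\|_1 \leq p\beta^2$, contributing at worst $-np\beta^2$; the remaining cross terms of the form $n e_1^{\T} v$ and $\sum_i v_i^{\T} e_1$ are absorbed under the structural convention (implicit in the offset-perturbation regime of the paper) that the shift vectors lie orthogonal to the signal direction $e_1$. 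This yields $\mu \geq n(1 - \beta^2 p)$, which is positive exactly in the separability regime $\beta^2 p < 1$ stated in the theorem.

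For the variance, the only random contribution to $y\widehat{\theta}^{\T} x$ is $y\, g^{\T}(y e_1 + v)$, which is a centered Gaussian with variance $\sigma^2 \|y e_1 + v\|_2^2$. Using $\|v\|_2^2 \leq p\beta^2$, this is upper bounded by $\sigma^2(1 + \beta^2 p)$. Substituting into the Gaussian tail bound gives
\[
\Pr[y\widehat{\theta}_{\mathrm{NoisyGD}}^{\T} x < 0] \;\leq\; \exp\!\left(-\frac{n^2(1-\beta^2 p)^2}{2(1+\beta^2 p)\sigma^2}\right),
\]
matching the stated bound up to a harmless absolute constant in the denominator. The sample-complexity statement then follows by setting the right-hand side equal to $\gamma$, using $\sigma^2 = G^2/(2\rho)$ with $G^2 = 1+\beta^2 p$, and solving for $n$.

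\paragraph{Main obstacle.} The delicate step is the mean lower bound. The quadratic error $\sum_i v_i^{\T} v$ scales like $n p\beta^2$ and naturally produces the factor $(1-\beta^2 p)$, but the linear cross terms $n e_1^{\T} v$ and $\sum_i v_i^{\T} e_1$ are only $O(n\beta)$ and, if unrestricted, would dominate and spoil the shape of the bound. Resolving this either requires invoking the orthogonality of $v_i, v$ to $e_1$ (the natural assumption in the deterministic offset-perturbation setting that motivates this theorem), or a symmetrization/centering argument that cancels the linear parts. Making this assumption explicit and consistent with the scaling $G^2 = 1+\beta^2 p$ used in the variance bound is the nontrivial bookkeeping that underlies the clean form of the stated complexity.
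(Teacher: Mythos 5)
Your proposal follows essentially the same route as the paper's proof: decompose $y\widehat{\theta}^{\T}x$ into a deterministic mean plus a Gaussian term, lower-bound the mean by $n(1-\beta^2 p)$ using $|v_i^{\T}v|\leq p\beta^2$, upper-bound the variance by $\sigma^2(1+\beta^2 p)$, and finish with a single Gaussian tail bound. The orthogonality $v_i^{\T}e_1=0$ that you flag as the delicate point is indeed exactly what the paper imposes (stated explicitly at the top of Appendix~A as a ``without loss of generality'' reduction), and your extra factor of $2$ in the exponent from the standard tail bound $\Phi(-t)\leq e^{-t^2/2}$ is the correct constant; the paper's stated bound drops it.
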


\paragraph{Remark.}
This misclassification error also corresponds to $\beta^2p$, which is similar to the stochastic case.
When $\beta^2p < 1$, the misclassification error decays exponentially and the misclassification error is dimension-independent.

\paragraph{Multiple iterations.} For the multi-iteration case, we consider the projected NoisyGD to bound the parameters. Precisely, the output is defined iteratively as
\begin{align}
\label{eq:ProjectedNoisyGD}
    \theta_{k+1} = \mathcal{P}_{B_2^p(0,R)}\left(\theta_k - \eta\left(g_n(\theta_k) + \xi_k \right)  \right),
\end{align}
where $B_2^p(0,R)\subset \mathbb{R}^p$ is an $\ell_2$-norm ball with radius $R,$ $\xi_k \sim \mathcal{N}(0,\sigma^2 I_p),$ and $\mathcal{P}_A$ is the projection onto a convex set $A$ w.r.t.\ the Euclidean inner product.
Here we take $\sigma^2 = (1 + \beta^2p)/2\rho$ and the overall privacy budget is $k\rho.$

\begin{theorem}[Multiple iterations]
\label{thm:multi-projected-GD}
    Let $\theta_{k+1}$ be the output of projected NoisyGD defined in \eqref{eq:ProjectedNoisyGD}. For any $t>0$, if we take $\eta = \frac{R}{n(1 + \beta^2p) + (p + \sqrt{pt} +t)},$ then, the misclassification error is 
    \begin{align*}
        &\Pr\left[\theta_{k+1}^T(e_1 + v)<0\right] \leq \exp\left(-\frac{n^2}{C_{p,k}^2 \sigma^2(1 + \beta^2p)} \right) + k e^{-t},
    \end{align*}
    where $C_{p,k} = \frac{1 + 2^{-k}}{1 - 2^{-k}} \cdot \frac{1 - 1/2}{1 + 1/2} \cdot \frac{\left(1 + e^{R(1 + \beta^2p)}\right)^2}{(1 - \beta^2 p)^2}$ and $\sigma^2 = (1 +\beta^2p)/2\rho.$
    Specifically, for $t = n^2+\log 1/k,$ we have
    $
        \Pr\left[\theta_{k+1}^T(e_1 + v)<0\right] \leq O\left( e^{-\frac{\rho n^2}{(1 + \beta^2p)^2}}\right).
$
\end{theorem}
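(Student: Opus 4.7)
\textbf{Proof plan for Theorem \ref{thm:multi-projected-GD}.} The plan is to split $\theta_{k+1}$ into a deterministic signal contribution and a stochastic noise contribution, control the noise on a high-probability event, and then bound the misclassification probability by a one-sided Gaussian tail inequality. The two error terms in the stated bound correspond respectively to the Gaussian tail (the exponential) and to the failure of the noise-concentration event (the $k e^{-t}$).

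First I would establish noise concentration. For each iteration $j \le k$, the Laurent--Massart inequality for $\chi^2_p$ variables gives $\|\xi_j\|_2^2 \le \sigma^2 (p + 2\sqrt{pt} + 2t)$ with probability at least $1 - e^{-t}$, so a union bound over the $k$ iterations produces the additive $k e^{-t}$ failure term. The prescribed step size $\eta = R / [n(1+\beta^2 p) + (p + \sqrt{pt} + t)]$ is calibrated so that on this event a single noisy gradient step has $\ell_2$-norm at most $R$, which makes the projection onto $B_2^p(0,R)$ benign.

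Next, on the good event I would analyze the signal dynamics. The empirical cross-entropy $g_n(\theta) = \sum_i \nabla \ell(\theta, z_i)$ restricted to $B_2^p(0,R)$ is Lipschitz with constant of order $n(1+\beta^2 p)$ and is strongly convex along the averaged signal direction $e_1 + \bar v$ (where $\bar v = n^{-1}\sum_i v_i$) with modulus at least $n/(1+e^{R(1+\beta^2 p)})^2$, since on that ball the sigmoid derivative is bounded below by $(1+e^{R(1+\beta^2 p)})^{-2}$. With the prescribed $\eta$, one step of projected noiseless GD contracts with ratio at most $1/2$ toward the limiting classifier direction; projection onto the centered ball is radial and hence preserves alignment with $e_1 + \bar v$. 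Iterating $k$ times and telescoping the resulting geometric series produces the $(1 + 2^{-k})/(1 - 2^{-k})$ factor in $C_{p,k}$, while the $(1-1/2)/(1+1/2)$ factor comes from comparing the contracted signal amplitude to the limiting value.

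Finally, I would decompose $\theta_{k+1}^{\top}(e_1 + v)$ as a deterministic part plus a centered Gaussian. By Step 2 the deterministic part is at least of order $n(1 - \beta^2 p)/C_{p,k}$, using $\langle e_1 + \bar v, e_1 + v \rangle \ge 1 - \beta^2 p - O(\beta)$ which matches the single-iteration denominator of Theorem \ref{thm:fixed-perturbation}. The Gaussian part has variance at most $\sigma^2 (1 + \beta^2 p)$ after the contraction factors dampen the noise accumulated across iterations. The standard one-sided Gaussian tail $\Pr[Z < -a] \le \exp(-a^2/2\mathrm{Var}(Z))$ then delivers the first summand in the bound. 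Setting $t = n^2 + \log k$ makes $k e^{-t} = e^{-n^2}$ subdominant, and substituting $\sigma^2 = (1+\beta^2 p)/(2\rho)$ collapses the remaining exponential to the stated $O\bigl(e^{-\rho n^2 / (1+\beta^2 p)^2}\bigr)$.

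The main obstacle will be simultaneously handling the projection, the noise injection, and the fact that the logistic loss is only locally strongly convex. Projection is nonexpansive, so it cannot disrupt the contraction argument in principle, but one must verify that the iterates stay in a region where the curvature of the loss remains bounded below, which is exactly what the $(1+e^{R(1+\beta^2 p)})^{-2}$ factor in $C_{p,k}^{-1}$ encodes; the $(1 - \beta^2 p)^{-2}$ denominator, meanwhile, reflects the minimum signal-to-shift ratio needed for the margin along the test direction $e_1 + v$ to remain positive. Ensuring that both factors appear in $C_{p,k}$ with the right power, rather than compounding exponentially in $k$, is the delicate point that requires the contraction ratio $\le 1/2$ established in Step 2.
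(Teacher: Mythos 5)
Your plan shares the paper's overall skeleton: a $\chi_p^2$ tail bound plus a union bound gives the $ke^{-t}$ failure term; the iterate's inner product with $e_1 + v$ is tracked across the $k$ steps via a geometric recursion; a Gaussian tail closes the argument; and $t = n^2 + \log(1/k)$ makes the concentration failure subdominant. You also correctly identify that the origin-centered ball projection is radial (direction-preserving) and that $\eta$ is calibrated so a single noisy step has norm at most $R$.

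Where your plan departs from the paper, and where a gap opens, is in what drives the signal recursion. The paper never uses strong convexity or a Lipschitz constant of $g_n$; its two ingredients are (i) a uniform first-order lower bound $-g_n(\theta)^\top(e_1+v) \geq n(1-\beta^2p)/(1+e^{R(1+\beta^2p)})$ over the ball (a single power of $1+e^{R(1+\beta^2p)}$, not the square you attribute to a sigmoid second-derivative bound; the square enters only because $C_{p,k}^2$ appears in the final exponent), and (ii) the identity $\theta_{k+1}^\top(e_1+v) = c_{R,k}\bigl(\theta_k - \eta(g_n(\theta_k)+\xi_k)\bigr)^\top(e_1+v)$ with $c_{R,k}\in(0,1]$ being exactly the radial projection factor. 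The $\eta$ choice forces $c_{R,k}\geq 1/2$ on the good event, and that $1/2$ \emph{is} the projection scaling, not a contraction rate of projected GD toward an optimizer. This matters because the cross-entropy loss on linearly separable data has no interior minimizer, so your notion of ``contraction toward the limiting classifier direction'' is not well-defined without substantial extra work; the paper avoids ever invoking a fixed point and simply unrolls the scalar recursion $a_{k+1}\geq C\bigl(a_k + n\eta\tfrac{1-\beta^2p}{1+e^{R(1+\beta^2p)}} - \eta\xi_k^\top(e_1+v)\bigr)$ with $C=1/2$. Relatedly, the two factors $\tfrac{1+2^{-k}}{1-2^{-k}}$ and $\tfrac{1/2}{3/2}$ are not separate contributions from ``telescoping'' and ``amplitude comparison''; both fall out of the single ratio $\sum_j C^{2(k-j+1)}\big/\bigl(\sum_j C^{k-j+1}\bigr)^2 = \tfrac{1+C^k}{1-C^k}\cdot\tfrac{1-C}{1+C}$ at $C=1/2$, which compares the accumulated noise variance to the squared accumulated signal. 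To repair your plan, replace the strong-convexity/contraction step with the uniform gradient-projection lower bound and the explicit radial-rescaling recursion.
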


\paragraph{Remark.} To make the projected NoisyGD converge exponentially, we still require $\beta^2 < \frac{1}{p}.$

\subsection{Robustness of NoisyGD under perturbations}
\label{sec:robustness}

In this section, we explore the robustness of NoisyGD against various perturbation types.
For the sake of brevity, we focus on perturbations of the perfect feature ($\beta=0$) by different attackers. The theoretical framework can easily be extended to include perturbations of actual features, following the same proof structure as outlined in Theorem \ref{thm:GD-random perturbation} and Theorem \ref{thm:NoisyGD-random perturbation}.
Our findings indicate that each mentioned perturbation type affects the feature shift parameter $\beta$, potentially increasing NoisyGD's dimension dependency. 

\subsubsection{Stochastic attackers}
\paragraph{Non-robustness to perturbations in the training time.} If the training feature is perturbed by some stochastic perturbation (while the testing feature is perfect), then, the misclassification error for GD is $\exp\left(-\frac{n^2}{\widetilde{\beta}^2}\right)$, which is dimension-independent for any $\widetilde{\beta}>0.$ 
However, the sample complexity for NoisyGD  is $O\left(\sqrt{\frac{\max\{\widetilde{\beta}^2p, 1\}\log(1/\gamma)}{ \rho}}\right).$
Thus, the NoisyGD is non-robust even when we only perturb the training feature with attackers that make $\widetilde{\beta} > p^{-1/2}.$
We postpone the details to Appendix \ref{sec:ran-perturb-train}

\paragraph{Non-robustness to perturbations in the testing time.} If we only perturb the testing feature, then we still require $
O\left(\frac{\max\{\sqrt{p} \widetilde{\beta}, 1\} \sqrt{\log(1/\gamma)}}{\sqrt{2\rho}}\right) 
$ samples to achieve a misclassification error $\gamma$, which is still non-robust when $\widetilde{\beta} > p^{-1/2}.$
The  technical detail is similar to the proof of Theorem \ref{thm:NoisyGD-random perturbation}.

\subsubsection{Deterministic attackers}


\paragraph{Non robustness to offset perturbations in the training time.}
Even if we just shift the training feature vectors away by a constant offset (while keeping the same margin and angle between features), it makes DP learning a lot harder. 
Precisely, for some vector $v\in\mathbb{R}^p$, we consider $v_i = v$ for $y_i = 1$ and $v_i = -v$ for $y_i = -1$.
Moreover, this makes absolutely no difference to the gradient, when we start from $0$ because
$$
\nabla \cL(\theta)  =  \frac{n}{2}\cdot 0.5\cdot -(-e_1 + v)  + \frac{n}{2} \cdot 0.5 \cdot (e_1 + v)  = \frac{n}{2} e_1.
$$
Thus, for GD, the misclassification error is always $0$.
If all we know is that $\|v\|_\infty \leq \widetilde{\beta}$, the sample complexity for making the classification error less than $\gamma$ will be $O\left(\frac{\max\{p\widetilde{\beta}^2,1\}\sqrt{\log(1/\gamma)}}{\sqrt{\rho}}\right).$
The details are given in Appendix \ref{sec:perturb-training-fixed}.

\paragraph{Non-robustness to class imbalance.}
Note that in the above case, it is quite a coincidence that $v$ gets cancelled out in the non-private gradient.
When the class is not balanced, the offset $v$  will be part of the gradient that overwhelms the signal. Consider the case where we have $\alpha n$ data points with label $-1$ and $(1-\alpha) n$ data points with label $1$ for $\alpha \neq 0.5$, and we start at $0$, then 
$
\nabla \cL(\theta)  = \frac{n}{2} e_1 +  \frac{(1-2\alpha)n}{2} v.
$
If we allow the perturbation $v $ to be adversarially chosen, then there exists $v$ satisfying $\|v\|_\infty\leq \widetilde{\beta}$ such that the sample complexity bound to achieve a misclassification error $\gamma$  is $O\left(\frac{  \max\{\sqrt{p}\widetilde{\beta}, 1\}\sqrt{ \log\frac{1}{\gamma}}}{\sqrt{(1 - \widetilde{\beta} + 2\widetilde{\beta}\alpha)^2\cdot \rho}} \right).$

\paragraph{Non-robustness to adversarial perturbations in the testing time.}  When $v_i = 0$ and  $\|v\|_{\infty} \leq \beta,$
that is, we only consider perturbations in the testing time, if we allow the perturbation $v$ to be adversarially chosen, then there exists $v$ satisfying $\|v\|_\infty\leq \widetilde{\beta}$ such that the sample complexity bound to achieve misclassification rate $\gamma$ is $O\left(\frac{G  \max\{p\widetilde{\beta}, 1\} \sqrt{\log(1/\gamma)}}{\sqrt{2\rho}}\right).$
One may refer to Appendix \ref{sec:adv-perturb-test} for the detail.

\section{Solutions for non-robustness issues}
\label{sec:remedy}

In this section, we will explore various solutions for enhancing the robustness of NoisyGD.
To deal with random perturbations, we suggest performing dimension reduction to reduce the feature shift parameter $\beta$, as detailed in Section \ref{sec: pca}.
For offset perturbations, we will consider feature normalization to cancel out the perturbation, as discussed in Section \ref{sec:normalization}.
The proof of this section can be found in Appendix \ref{sec:proof-remedy}.

\subsection{Mitigating random perturbation: dimension reduction}\label{sec: pca}
In \citet{abadi2016DPSGD}, dimension reduction methods, such as DP-PCA, were employed to enhance the performance of deep models. In this section, we demonstrate that applying PCA to the private features effectively improves robustness against random perturbations.
Since we have a public dataset for pre-training a model, we consider performing dimension reduction with this public dataset.
Similar to the PCA method, \cite{DBLP:conf/satml/PintoHYS24,DBLP:conf/alt/NguyenUZ20} demonstrate that DP learning also achieves dimension-free learning bounds on 0-1 losses by applying random projections or a transformation matrix to the features.

To perform dimension reduction, our goal is to generate a projection matrix $\widehat{P} = [\widehat{P}_1, \ldots, \widehat{P}_{K-1}] \in \mathbb{R}^{p \times (K-1)}$ and train with the dataset ${(\widetilde{x}_i = \widehat{P}x_i, y_i)}_{i=1}^n$. It is obvious that the ``best projection" is one where the space spanned by $\widehat{P}$ matches the space spanned by $\{M_i\}_{i=1}^{K}$, with $M_i$ being the perfect feature of the $i$-th class (the $i$-th column of an ETF).

In practice, it is not possible to obtain $\{M_i\}_{i=1}^{K}$ directly, and $\widehat{P}$ needs to be generated using another dataset $\{(\widehat{x}_i, \widehat{y}_i)\}_{i=1}^m$.

Recalling the binary classification problem with a training set $\{(x_i, y_i)\}_{i=1}^n$ and $v_i$ as the feature shift vector of $x_i$, as discussed in Section \ref{sec:robustness}, even in the case of class balance, the accuracy is not robust when $\beta^2 \geq 1/p$. Consider a projection vector $\widehat{P} = e_1 + \Delta$ with $\Delta$ satisfying $\|\Delta\|_{\infty} \leq \beta_0$. The following theorem shows that for $\beta\beta_0 < \frac{1}{p}$, the misclassification error decays exponentially and is dimension-independent.

\begin{theorem}
\label{thm:pert}
    For the NoisyGD trained with $\{(\widetilde{x}_i,y_i)\}_{i=1}^n$, the sample complexity to achieve a misclassification error $\gamma$ is $
        n = O\left(\sqrt{\frac{G_{\beta,\beta_0,p}^2 \log\frac{2}{\gamma}} {M_{\beta,\beta_0,p}\rho}} \right)$
with $G_{\beta,\beta_0,p} = 1 + \beta(1 + \beta_0+p\beta_0)$ and $M_{\beta,\beta_0,p} = (1 - \beta_0)^2 - p\beta\beta_0 - (1 + \beta_0)(\beta+ \beta_0 p) - (\beta + \beta_0p)(1 + \beta + \beta_0 + \beta \beta_0 p)$.
\end{theorem}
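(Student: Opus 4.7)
\medskip
\noindent\textbf{Proof proposal.}
The plan is to reduce the task, after projection, to a one-dimensional binary linear classification problem and then adapt the argument of Theorem~\ref{thm:NoisyGD-random perturbation}. Since $K=2$, $\widehat{P}\in\mathbb{R}^p$ and $\widetilde{x}_i=\widehat{P}^{\top} x_i$ is scalar. Expanding,
$$
\widetilde{x}_i=(e_1+\Delta)^{\top}(y_i e_1+v_i)=y_i(1+\Delta_1)+v_{i,1}+\Delta^{\top} v_i,
$$
where $|\Delta_1|\le\beta_0$, $|v_{i,1}|\le\beta$, and $|\Delta^{\top} v_i|\le\|\Delta\|_1\|v_i\|_\infty\le p\beta_0\beta$. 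These three bounds collectively yield the stated sensitivity $G_{\beta,\beta_0,p}=1+\beta(1+\beta_0+p\beta_0)$; NoisyGD therefore uses Gaussian scale $\sigma^2=G_{\beta,\beta_0,p}^2/(2\rho)$ to satisfy $\rho$-zCDP. One iteration at zero initialization under the cross-entropy loss then yields the scalar classifier $\widehat{\theta}=\tfrac{\eta}{2}\sum_{i=1}^n y_i\widetilde{x}_i+\eta\xi$ with $\xi\sim\mathcal{N}(0,\sigma^2)$.

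The misclassification event is $\{y_{\text{test}}\widehat{\theta}\widetilde{x}_{\text{test}}<0\}$. I would write $y_j\widetilde{x}_j=(1+\Delta_1)+U_j$ where $U_j=y_j(v_{j,1}+\Delta^{\top} v_j)$ is centered, symmetric, and bounded by $\beta+p\beta_0\beta$, so that
$$
y_{\text{test}}\widehat{\theta}\widetilde{x}_{\text{test}}=\tfrac{\eta}{2}\Bigl[n(1+\Delta_1)^2+(1+\Delta_1)\!\sum_{i=1}^n U_i+n(1+\Delta_1)U_{\text{test}}+\!\sum_{i=1}^n U_i U_{\text{test}}+2\,\xi\, y_{\text{test}}\widetilde{x}_{\text{test}}\Bigr].
$$
Using $\E[v_i]=0$ and independence of training and test points, the signal is $\tfrac{\eta n}{2}(1+\Delta_1)^2\ge \tfrac{\eta n}{2}(1-\beta_0)^2$, which supplies the leading $(1-\beta_0)^2$ in $M_{\beta,\beta_0,p}$. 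The remaining three mixed terms are then handled as worst-case fluctuations of $U_i$ and $U_{\text{test}}$: the factor $(1+\Delta_1)U_{\text{test}}$ contributes at most $(1+\beta_0)(\beta+\beta_0 p)$; the product $U_iU_{\text{test}}$ summed over~$i$ contributes the correction $(\beta+\beta_0 p)(1+\beta+\beta_0+\beta\beta_0p)$; and the coupling between the $\Delta^{\top}v$ pieces across training/test produces the $p\beta\beta_0$ subtraction. Collecting these yields an effective margin $\E[y_{\text{test}}\widehat{\theta}\widetilde{x}_{\text{test}}]\gtrsim \tfrac{\eta n}{2} M_{\beta,\beta_0,p}$.

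For the tail, in the private regime the DP Gaussian dominates: given $\widetilde{x}_{\text{test}}$, the term $\eta y_{\text{test}}\xi\widetilde{x}_{\text{test}}$ has variance at most $\eta^2 G_{\beta,\beta_0,p}^2\sigma^2=\eta^2 G_{\beta,\beta_0,p}^4/(2\rho)$, while the sub-Gaussian fluctuations of $\sum_i U_i$ and $U_{\text{test}}$ have already been absorbed into~$M_{\beta,\beta_0,p}$ (with Hoeffding used only to ensure the absorption holds with probability at least $1-\gamma/2$). A standard Gaussian/Chernoff tail bound then gives
$$
\Pr\!\bigl[y_{\text{test}}\widehat{\theta}\widetilde{x}_{\text{test}}<0\bigr]\;\le\;2\exp\!\Bigl(-\tfrac{c\,n^2 M_{\beta,\beta_0,p}\,\rho}{G_{\beta,\beta_0,p}^2}\Bigr),
$$
and inverting this in~$\gamma$ recovers $n=O\!\bigl(\sqrt{G_{\beta,\beta_0,p}^2\log(2/\gamma)/(M_{\beta,\beta_0,p}\rho)}\bigr)$. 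Note that $M_{\beta,\beta_0,p}$ plays the role of a \emph{squared} margin (its leading term is $(1-\beta_0)^2$), which is why the final bound reads $\sqrt{G^2/(M\rho)}$ rather than $G/(M\sqrt{\rho})$.

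The main obstacle is the bookkeeping of the three cross terms $(1+\Delta_1)\sum_iU_i$, $n(1+\Delta_1)U_{\text{test}}$, and $\sum_iU_iU_{\text{test}}$: they couple training randomness, test randomness, and the projection error $\Delta$, and one must choose per term between taking advantage of symmetric cancellation (via $\E[v]=0$) and applying a worst-case Hoeffding/Hölder bound. Matching the precise coefficients in $M_{\beta,\beta_0,p}$ requires this split to be performed so that the negative contributions telescope exactly into the three subtracted expressions, leaving the positivity condition $M_{\beta,\beta_0,p}>0$ as the sufficient requirement $\beta\beta_0 p\ll 1$ that delivers the advertised dimension-independent behaviour once PCA is applied.
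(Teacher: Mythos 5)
Your proposal follows the paper's proof quite closely: both set up the one-dimensional projected problem $\widetilde{x}_i=\widehat{P}^{\top}x_i$, bound the sensitivity by $G_{\beta,\beta_0,p}=1+\beta(1+\beta_0+p\beta_0)$, write the (pre-noise) margin as $(e_1+V)^{\top}\widehat{P}\widehat{P}^{\top}(e_1+v)$ (your $(1+\Delta_1)^2+\cdots$ decomposition), bound the signal term below by $(1-\beta_0)^2$, and control the three cross terms by Hoeffding/worst-case arguments to assemble $M_{\beta,\beta_0,p}$. The paper's proof in fact stops after collecting these four bounds, so your final Gaussian tail-bound step is a genuine completion rather than a departure.

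One spot where the completion is imprecise, and worth tightening: you upper-bound the DP noise term's conditional variance by $\eta^2G^2_{\beta,\beta_0,p}\sigma^2=\eta^2G^4_{\beta,\beta_0,p}/(2\rho)$ and then assert the tail $\exp(-cn^2M_{\beta,\beta_0,p}\rho/G^2_{\beta,\beta_0,p})$. With a margin $\approx\tfrac{\eta n}{2}M_{\beta,\beta_0,p}$ and that worst-case variance, a Gaussian tail actually yields $\exp(-\Theta(n^2M^2_{\beta,\beta_0,p}\rho/G^4_{\beta,\beta_0,p}))$, which inverts to $n=O\bigl(G^2_{\beta,\beta_0,p}\sqrt{\log(2/\gamma)}/(M_{\beta,\beta_0,p}\sqrt{\rho})\bigr)$, not the advertised $O\bigl(\sqrt{G^2_{\beta,\beta_0,p}\log(2/\gamma)/(M_{\beta,\beta_0,p}\rho)}\bigr)$. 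To match the theorem's form one must use the finer observation that the noise standard deviation scales as $\sigma|\widetilde{x}_{\mathrm{test}}|\approx\sigma(1+\Delta_1)$ (rather than $\sigma G_{\beta,\beta_0,p}$), so one factor of the margin cancels: standardized mean $\approx n(1+\Delta_1)/\sigma$, giving $\exp(-\Theta(n^2(1-\beta_0)^2\rho/G^2_{\beta,\beta_0,p}))\le\exp(-\Theta(n^2M_{\beta,\beta_0,p}\rho/G^2_{\beta,\beta_0,p}))$. Your remark that ``$M$ plays the role of a squared margin'' gestures at exactly this cancellation; you just need to carry it through the variance rather than stopping at the worst-case $G^2$ bound. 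Otherwise the approach is the paper's.
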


\paragraph{Remark.} Theorem \ref{thm:pert} suggests that dimension reduction can relax the requirement from $\beta^2p \leq 1$ to $\beta\beta_0p\leq 1.$ Thus, dimension reduction can enhance robustness whenever $\beta_0 < \beta$. Typically, $\beta_0$ is relatively small and tends to $0$ as $m$ increases.

\label{sec:projection}
The next question is how to construct the projection matrix $\widehat{P} = [\widehat{P}_1,\cdots, \widehat{P}_{K-1}] \in \mathbb{R}^{p\times (K-1)}$.
We introduce the follwoing two methods.

\paragraph{Principle component analysis.}
Let $\{\widehat{P}_j\}_{j=1}^{K-1}$ be the the eigenvectors corresponding to $K-1$ largest eigenvalues of $\widehat{\Sigma}=\frac{1}{m}\sum_{i=1}^m \widehat{x}_i \widehat{x}_i^T.$
For the binary case ($K=2$), we have $\widehat{\Sigma}$ converges to $\Sigma = e_1e_1^T + \widehat{\beta}^2 I_p$ for some constant $\widehat{\beta}.$ Note that the eigenvector corresponding to the largest eigenvalue of $\Sigma$ is the perfect feature $e_1.$
As $\beta_0$ is the infinity norm of $\Delta$, we use a bound on the infinity norm of eigenvectors \citep{DBLP:journals/jmlr/FanWZ17}.
We state the results for $K=2$ that can be extended to $K>2.$
Precisely, for $K=2$, let $\widehat{P}$ be the eigenvector of $\frac{1}{m}\sum_{i=1}^m \widehat{x}_i\widehat{x}_i^T$ that corresponds to the largest eigenvalue.
Then, it holds
$
        \beta_0 = \| \widehat{P} - e_1\|_\infty \leq O\left(\frac{1}{\sqrt{m}}\right)
$
    with probability $O\left(p e^{-m^2}\right).$

\paragraph{Releasing the mean of features.}
Let $X_{k} = \{\widehat{x}_i: \widehat{y}_i \hbox{ belongs to the $k$-th class}\}.$
Let $\widehat{P}_k = \frac{1}{m_k}\sum_{\widehat{x}_i\in X_k} \widehat{x}_i$ with $m_k$ being the size of $X_k$.
Then, we have $\Delta = \widehat{P}_k - M_k = \frac{1}{m_k}\sum_{\widehat{x}_i \in X_k} \widehat{x}_i.$
By the concentration inequality, we have $\beta_0 = \|\Delta\|_\infty \leq O\left(\frac{\widetilde{\beta}}{\sqrt{m_k}}\right)$ with probability $pe^{-{m_k^2}}.$

\subsection{Addressing offset perturbations: normalization}
\label{sec:normalization}
In this section, we explore how feature normalization influences NoisyGD, examining it through the lens of neural collapse. Recent studies \citep{DBLP:conf/iclr/SunSM24} have also demonstrated that data normalization can improve the performance of differential privacy (DP) training.

Recall the shift perturbation where, for $x_i\in X_k$, we have $x_i = \widetilde{M}_k:= M_k+ v$ with some fixed vector $v$.

To deal with the offset perturbation $v$, we pre-process the feature as $\widetilde{x}_i = x_i - \frac{1}{n}\sum_{j=1}^n x_j$.
Then, if the class is balanced, it holds
$
    \widetilde{x}_i = \widetilde{M}_k - \frac{1}{K}\sum_{j=1}^K \widetilde{M}_j = M_k$  for $x_i \in X_k.$
That is, the perturbations canceled out.

We still need to bound the sensitivity of the gradient when training with $\{(\widetilde{x}_i, y_i)\}_{i=1}^n.$
If we delete arbitrary $(x_j,y_j)$ from the dataset, then for the case $K=2$ with data balance, the sensitivity of the gradient is $
    G = \frac{n}{n-1}\leq 2,
$
which is upper bounded by a dimension-independent constant. The sample complexity to achieve the misclassification error $\gamma$ is $O\left(\frac{\sqrt{\log(1/\gamma)}}{\sqrt{\rho}}\right),$ which is dimension independent.

Note that this normalization method is not robust to class imbalance.
In fact, if we consider the class imbalanced case with which we have $\alpha n$ data points with label $+1$ and the rest $(1 - \alpha)n$ data points with label $-1$ for some $\alpha>0$, then we have $\widetilde{x}_i = 2(1 - \alpha) e_1$ for $y_i = 1$ and $\widetilde{x}_i = -2 \alpha e_1$ for $y_i = -1.$
In this class-imbalance case, one can recover the feature embedding $e_1$ and $-e_1$ by considering $\frac{\widetilde{x}_i}{\|\widetilde{x}_i\|_2}.$ 
However, in this case, the sensitivity remains a constant $G_\alpha$ which, although independent of the dimension $p$, still relies on $\alpha$.

\section{Experiments}
\label{sec:experiments}

In this section, we will conduct four sets of experiments to validate our theoretical analysis in Section \ref{sec:theory} and Section \ref{sec:remedy}.
We first discuss the details of Figure \ref{fig:nc_cifar10}, which empirically evaluates the neural collapse level using the feature shift parameter with different pre-trained models in Section \ref{sec:empirical-feature-shift}.
The second experiment, outlined in Section \ref{sec:experiment-perfect-feature}, focuses on the synthetic perfect neural collapse scenario.
In the third experiment, detailed in Section \ref{exp: real}, we empirically investigate the behavior of NoisySGD (with fine-tuning the last layer) under different robustness settings and demonstrate that NoisySGD is ``almost" dimension-independent, but this ``dimension-independency" is not robust to minor perturbations in the last-layer feature.
In the last experiment, discussed in Section \ref{sec:expe-pca}, we demonstrate that dimension reduction methods such as PCA effectively enhance the robustness of NoisySGD.

\subsection{Evaluate the feature shift vector}
\label{sec:empirical-feature-shift}

In this section, we empirically investigate the feature shift parameter $\beta$ if the pre-trained transformer is the Vision Transformer (ViT) or ResNet-50 and the fine-tuned dataset is CIFAR-10. The results are displayed in Figure \ref{fig:nc_cifar10} in the Introduction section.

Recall the feature shift parameter $\beta = \|x - M_k\|_{\infty}$ that is the $\ell_{\infty}$ distance between the feature $x$ and the perfect feature feature mean $M_k$ of the class $k$. The perfect feature $M_k$ is unknown, thus, we cannot compute the feature shift parameter $\beta = \|x - M_k\|_{\infty}$ exactly. However, we can approximate $M_k$ by the empirical feature mean $\widehat{M}_k$, which is the average of all features in the $k$-th class. We use the following steps to evaluate $\beta$ numerically.
\begin{itemize}
\item If $\cos(\widehat{M}_i, \widehat{M}_j) \approx \frac{1}{K-1}$ for all $1 \leq i,j \leq K$, then we may claim that NC happens and $\widehat{M}_k$ can be regarded as the perfect feature (as implied by the maximal-equiangularity in \eqref{eq:defn-etf}).
\item Suppose we have $n$ data points with features $\{x_i\}_{i=1}^n$. For $x_i$ in the $k$-th class, we empirically calculate $\|x_i - \widehat{M}_k\|_{\infty}$ and plot their distribution. 
\end{itemize}

To instantiate various $\beta$ distributions, we consider two ImageNet pre-trained models (ResNet-50 and the vision transformer (ViT) model). We apply two models to extract training features, applying standard feature normalizations ($||x||_2=1$) and evaluate their empirical $\beta$ across all training samples.

Specifically, we first calculate the training feature mean per class $\widehat{M}_i$ for $i\in [K]$ and evaluate the cosine matrix of $\cos(\widehat{M}_i, \widehat{M}_j)$ for $1\leq i, j \leq K$. We found that all entries of the cosine matrix are close to $1/(K-1)$ (roughly all entries between $[-0.2, 0.2]$ and the cosine median is $0.082$ for the ViT model and $0.112$ for the ResNet-50 model). Therefore, we next evaluate $||x -\widehat{M}_k||_{\infty}$ for all training samples.

\subsection{Fine-tune NoisyGD with synthetic neural collapse feature}
\label{sec:experiment-perfect-feature}

We first generate a synthetic data matrix $X\in \cR^{n\times d}$ with feature dimension $d$ under perfect neural collapse.
The number of classes $K$ is $10$ and the sample size is $n = 10^{4}$.
In the default setting, we assume each class draws $n/K$ data from a column of $K$-ETF,  the training starts from a zero weight $\theta$ and the testing data are drawn from the same distribution as $X$. 
The Gaussian noise is selected such that the NoisyGD is $(1,10^{-4})$-DP.

In Figure~\ref{fig: syn}, we observe that an imbalanced class alone does not affect the utility. However, NoisyGD becomes non-robust to class imbalance when combined with a private feature offset with $||\nu||_\infty =0.1$. Additionally, it is non-robust to perturbed test data with $||\nu||_\infty=0.1$.

\subsection{Fine-tune NoisySGD with real datasets}\label{exp: real}

In this section, we empirically investigate the non-robustness of neural collapse using real datatsets.

Precisely, we fine-tune NoisySGD with the ImageNet pre-trained vision transformer~\citep{vit} on CIFAR-10 for $10$ epochs.
Test features in the perturb setting are subjected to Gaussian noise with a variance of 0.1. The vision transformer produces a 768-dimensional feature for each image. To simulate different feature dimensions, we randomly sample a subset of coordinates or make copies of the entire feature space. 

In Figure~\ref{fig: cifar10}, we observe that while perturbing the testing features degrades the utility of both Linear SGD and NoisySGD, Linear SGD is generally unaffected by the increasing dimension. On the other hand, the accuracy of NoisySGD deteriorates significantly as the dimension increases.

\begin{figure}[ht]
  \centering
  \subfigure[\textbf{Synthetic perfect NC feature $X\in \cR^{n\times d}$} with $K=10, n=10^4$ and $(1.0, 10^{-4})$-DP. In the default setting, we assume each class draws $|n/K|$ data from a column of $K$-ETF,  the training starts from a zero weight $w$ and the testing data are drawn from the same distribution as $X$. \label{fig: syn}]{\includegraphics[width=0.45\textwidth]{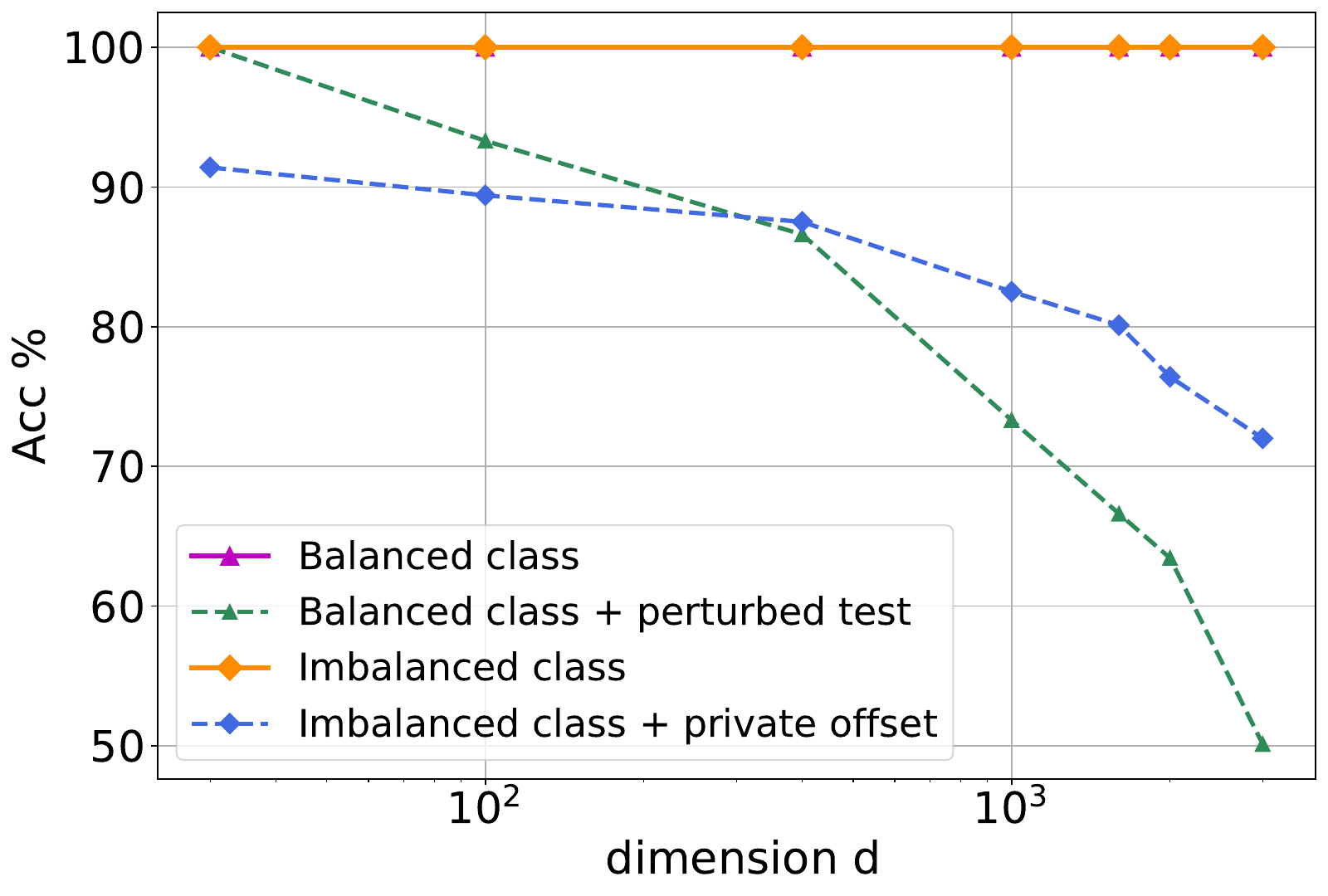} 
  }
  \hspace{1cm}
  \subfigure[\textbf{CIFAR-10}: Test features in the perturb setting are subjected to Gaussian noise with a variance of 0.1. The vision transformer produces a 768-dimensional feature for each image. To simulate different feature dimensions, we randomly sample a subset of coordinates or make copies of the entire feature space. \label{fig: cifar10}]{\includegraphics[width=0.45\textwidth]{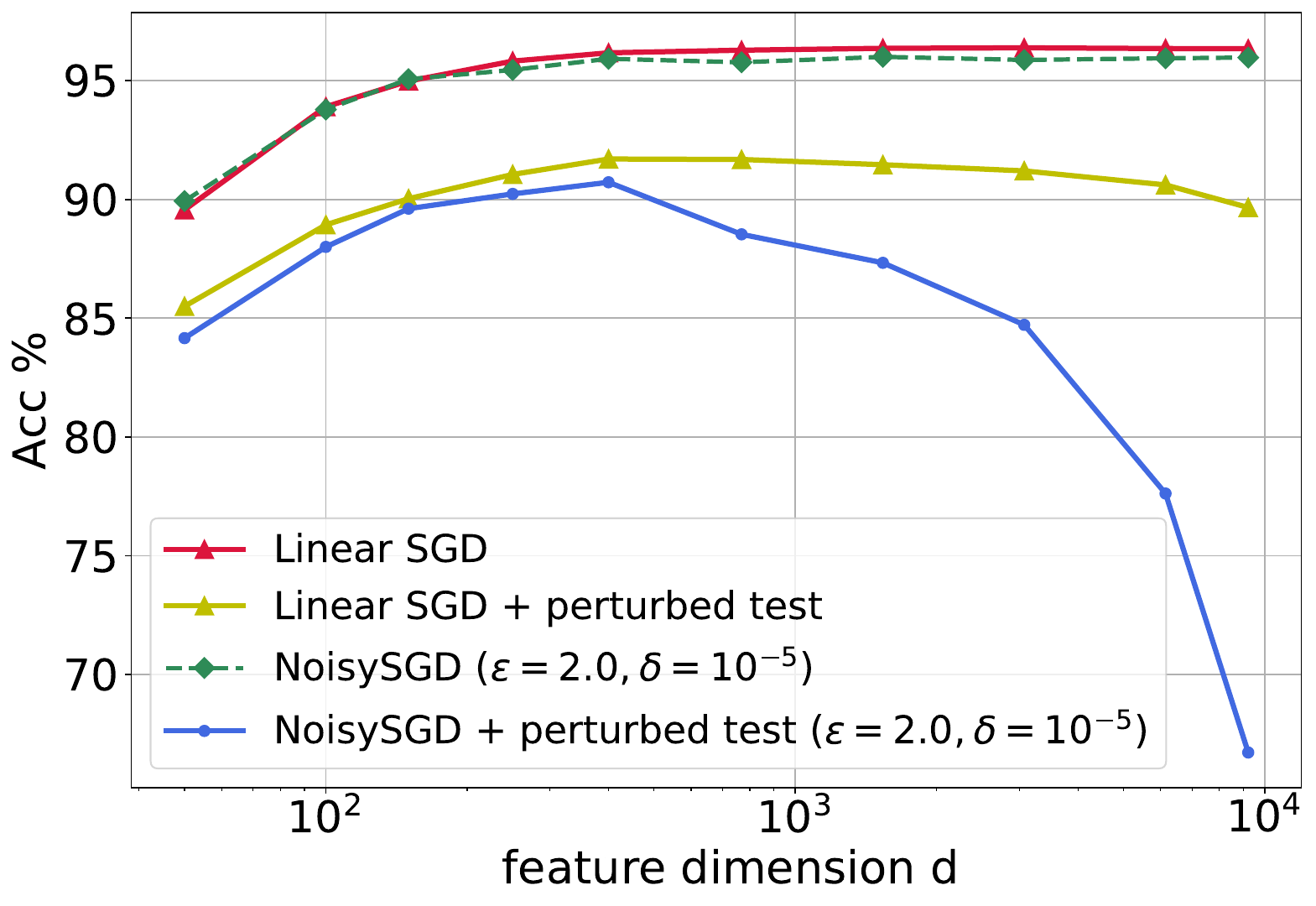}}
  \vspace{-1em}
  \caption{Empirical behaviors of NoisyGD under various robustness setting.}
  \label{fig:example}
  \vspace{-1em}
\end{figure}

\subsection{Enhance NoisySGD's robustness with PCA}
\label{sec:expe-pca}

In this experiment, we replicate the set up from Exp~\ref{exp: real},  simulating different feature dimensions and injecting Gaussian noise with a variance of $0.1$ to perturb all dimensions of both training and testing features. For simplicity, we apply PCA to the covariance matrix of private feature instead of a DP-PCA, followed by principal component projections on both private and testing features prior to feed them into the neural network. As discussed in Section~\ref{sec: pca}, choosing $K-1$ largest eigenvalues is sufficient to improve the robustness of NoisyGD. Therefore, we consider projecting features onto the top $k\in\{10, 50, 100\}$ components. Figure~\ref{fig: pca} shows that the best utility of NoisyGD is achieved when $k=10$, aligning with our theoretical findings. Moreover,  a larger $k=100$ fails to improve robustness, likely because the additional $90$ principal vectors contribute minimal information and introduce further randomness to the training.
\begin{figure}[ht]
  \centering
  \hspace{1cm}
  \includegraphics[width=0.45\textwidth]{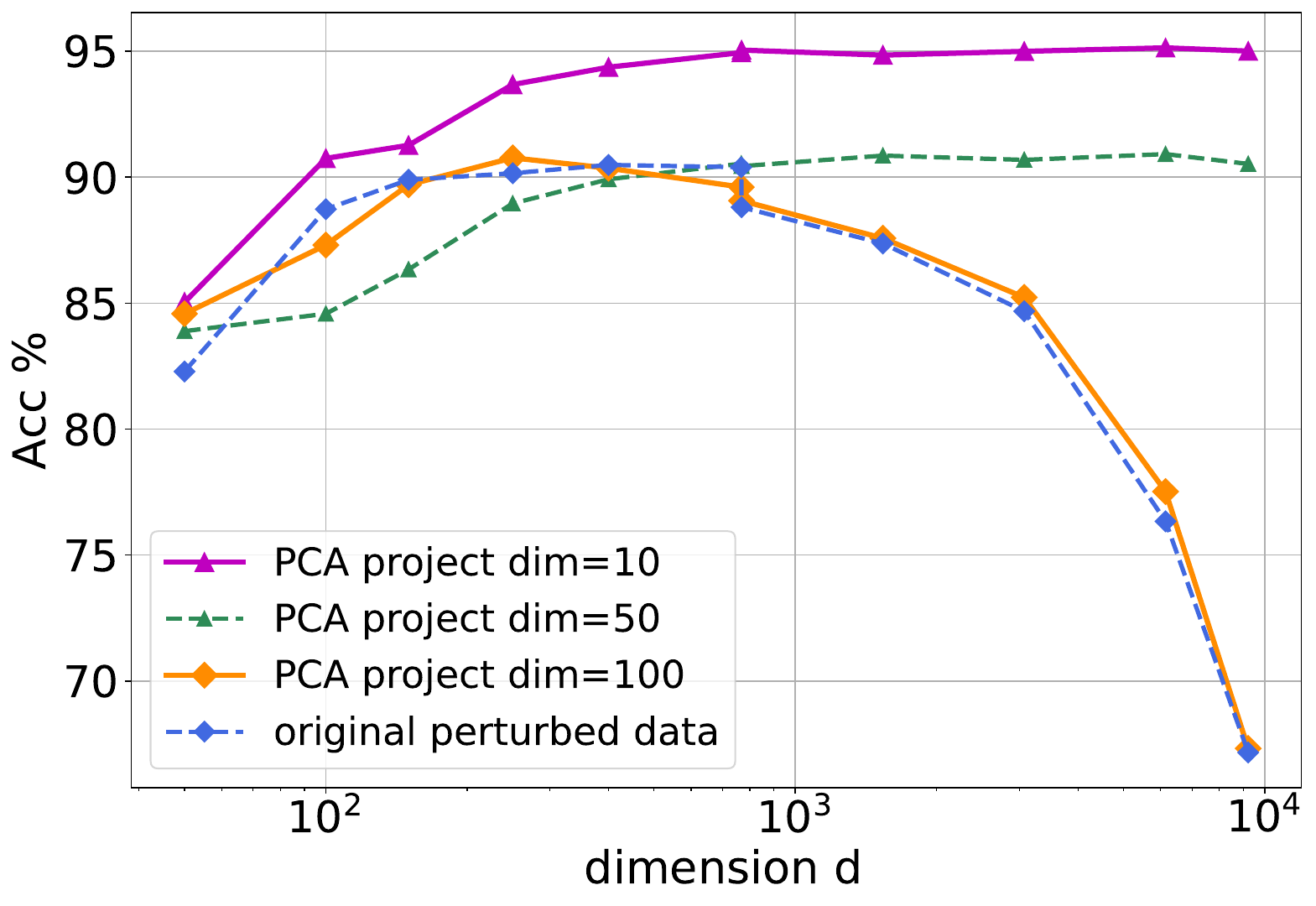}
  \caption{\textbf{CIFAR-10.} Apply PCA on both training and testing features before NoisySGD: setting $K-1$ principal components improves NoisySGD's robustness.}
  \label{fig: pca}
  \vspace{-1em}
\end{figure}

\section{Discussion and future work}
Most existing theory of DP-learning focuses on suboptimality in surrogate loss of testing data. Our paper studies 0-1 loss directly and observed very different behaviors under perfect and near-perfect neural collapse. In particular, we have $\log(1/\hbox{error})$ sample complexity rather than $1/\hbox{error}$ sample complexity.  
Our theoretical findings shed on light that privacy theorists should look into structures of data and how one can adapt to them.
Additionally, our result suggests a number of practical mitigations to make DP-learning more robust in nearly neural collapse settings.  It will be useful to investigate whether the same tricks are useful for private learning in general even without neural collapse.
Moreover, our results suggest that under neural collapse, choice of loss functions (square loss vs CE loss) do not matter very much for private learning. Square loss has the advantage of having a fixed Hessian independent to the parameter, thus making it easier to adapt to strong convexity parameters like in AdaSSP \citep{DBLP:conf/uai/Wang18}.  This is worth exploring.

NoisyGD and NoisySGD theory suggests that one needs $\Omega(n^2)$ time complexity to achieve optimal privacy-utility-trade off in DP-ERM (faster algorithms exist but more complex and they handle only some of the settings).  Our results on the other hand, suggest that when there are structures in the data, e.g., near-perfect neural collapse,  the choice of number of iterations is no longer important, thus making computation easier.

Another perspective in our future study is to investigate when NC will occur in transfer learning. The presence of low-rank structures such as NC in representation learning depends on the pre-trained dataset and the downstream task. Based on our experiments, if the downstream task is CIFAR-10, then ViT performs better than ResNet-50. According to other references in our paper, for example, \cite{masarczyk2023tunnel} show that the low-rank structure is observed if we pre-train on CIFAR-10 and fine-tune on any 10 classes of CIFAR-100. As NC may not consistently occur, we speculate that the collapse level is more significant when the classes of the downstream task closely resemble those of the pre-trained dataset. For instance, NC may manifest when a model is pre-trained on a broad category such as all animals, and the downstream task involves classifying more specific sub-classes (e.g., different breeds of dogs). This intuition needs further empirical investigation with ample computational resources.

    There is another important future topic to consider: what if NC does not occur? When NC cannot be observed, we conjecture that fine-tuning all layers under DP may lead to some low-rank structure of the last layer (potentially inducing some minor collapse phenomenon due to the random noise introduced by DP). Whether NC can be observed after fine-tuning all layers under DP will be validated in our future study.

It is obvious that our theory for NoisyGD can be further generalized using Gaussian differential privacy \citep{MR4400389}, which leads to better utility analysis of our theory. However, extending our theory to NoisySGD is not straightforward, as the privacy accounting becomes complicated when considering sub-sampling due to mini-batches \citep{DBLP:conf/icml/ZhuW19, DBLP:journals/jpc/WangBK20, DBLP:conf/nips/BalleBG18}. If we further consider multiple iterations, the joint effects of sub-sampling and composition on the privacy budget is another challenge \citep{DBLP:conf/aistats/0005DW22}. Besides using composition and the central limit theorem for Gaussian DP \citep{Bu2020Deep,wang2022analytical}, incorporating recently developed privacy analyses of last-iteration output of Noisy(S)GD in terms of the training dynamic \citep{DBLP:conf/nips/0001S22,DBLP:conf/nips/AltschulerT22,bok2024shifted} to obtain refined privacy analyses is another potential future topic. Existing privacy analyses of the last-iteration output are applied to strongly convex loss functions or convex functions with bounded domain, which is applicable to our setting when fine-tuning the last layer.
If we further consider the privacy of NoisySGD under random initialization, the privacy analysis becomes much more complicated, as studied by \citep{ye2023initialization, wang2023unified}.
Moreover, in Appendix \ref{sec:ran-init}, we discussed some extensions of our utility theory to the random initialization case, which shows that the utility theory is much more sophisticated than in the $0$-initialization case.


In addition to DP-ERM, there has been notable recent interest in using public data to enhance the accuracy of DP synthetic data \citep{ghalebikesabi2023differentially,DBLP:conf/icml/LiuV0UW21}. The incorporation of public data, either for traditional query-based synthetic data methods \citep{DBLP:journals/jpc/McKennaMS21,li2023statistical} or more recent techniques such as DP generative adversarial networks or diffusion models \citep{ 10.1145/3422622,DBLP:conf/nips/HoJA20}, has shown promise. The possibility of extending our perspective from Neural Collapse on DP-ERM to DP synthetic data in future research is intriguing.

\subsection*{Acknowledgments}
The research was partially supported by NSF Awards No. 2048091 and No. 2134214.

{\small

\bibliographystyle{apalike}
\bibliography{refs.bib}
}

\newpage
\appendix
\begin{center}
{\Large \bf Appendix}
\end{center}

\section{Proofs of Section \ref{sec:fixedperturb}}
In this section, we provide the proofs of Section \ref{sec:fixedperturb}.
Without loss of generality, we let $v_i^Te_1 = 0$.
Otherwise, by the orthogonal decomposition, there is a constant $c>0$ and a shift vector $v_i$ such that $y_ix_i = c(e_1 + v_i)$ and it is obvious that a scalar $c$ will not change the misclassification error based on our proof details.

\label{sec:proof-perturb}

\subsection{Proof of Theorem \ref{thm:GD-random perturbation}}
\label{sec:proof-GD-random}
The output of linear GD without DP guarantee is given by
\begin{align*}
    \widehat{\theta} = \eta \sum_{i=1}^n y_i x_i.
\end{align*}
For a testing data point $(x,y)$, without loss of generality, we consider $y = 1$ and $x = e_1 + v$ for some vector $v$.
When $v$ is fixed, then, the misclassification error is 0 if $1 - \beta^2p >0.$

For $v_i$ being symmetric i.i.d.\ random vectors, since $v_i^T e_1 = 0$ and $v^Te_1 = 0$, we have the misclassification error is given by
\begin{align*}
    \Pr_{v,v_i}\left[-\left(ne_1 + \sum_{i=1}^n v_i\right)^T(e_1 + v) >0 \right] \leq  \Pr\left[-\sum_{i=1}^n v_i^Tv> n \right].
\end{align*}

Since $|v_i^Tv|\leq \beta^2p$ and $\mathbb{E}[(v_i^Tv)^2]\leq \beta^4p^2$, by a Bernstein-type inequality, we have
\begin{align*}
    \Pr\left[-\sum_{i=1}^n v_i^Tv> n \right] &= \mathbb{E}_v\Pr\left[\left. -\sum_{i=1}^n v_i^Tv> n\right|v\right] \leq \exp\left(-\frac{n^2}{2\left(\beta^4np^2 + \frac{1}{3}\beta^2pn\right)}\right).
\end{align*}

Rewrite $v_i = (v_i^j)_{j=1}^p$ and $v = (v^j)_{j=1}^p$ for $v_i^j, v^j\in [-\beta,\beta].$ 
Note that $|v_i^jv^j|\leq \beta^2$ and $\mathbb{E}[(v_i^jv^j)^2|v^j] \leq \beta^4.$
If we further assume that and $v_i^j, 1\leq i \leq n, 1\leq j \leq p$ are independent random variables, then
, by a Bernstein-type inequality, we have
\begin{align*}
    \Pr\left[-\sum_{i=1}^n v_i^Tv> n \right] &= \mathbb{E}_v\Pr\left[\left. -\sum_{i=1}^n v_i^Tv> n\right|v\right] = \mathbb{E}_v\Pr\left[\left. -\sum_{i=1}^n \sum_{j=1}^p v_i^jv^j> n\right|v\right]
    \\
    &\leq \exp\left(-\frac{n^2}{2\left(\beta^4np + \frac{1}{3}\beta^2n\right)}\right).
\end{align*}


\subsection{Proof of Theorem \ref{thm:NoisyGD-random perturbation}}
\label{sec:proof-fixed-perturbation}

The output of NoisyGD is given by
\begin{align*}
    \widehat{\theta}_{\mathrm{NoisyGD}} = \eta \left(\sum_{i=1}^n y_i x_i + \mathcal{N}(0,\sigma^2 I)\right).
\end{align*}
For a testing data $x = e_1 + v$, we have
\begin{align*}   y\widehat{\theta}_{\mathrm{NoisyGD}}^T x = \mu_n + \mathcal{N}(0,\eta^2\|x\|_2^2 \sigma^2),
\end{align*}
where $\mu_n = \eta (n + \sum_{i=1}^n v_i^T v).$

By the law of total probability, we have
\begin{align*}
    \Pr[y\widehat{\theta}_{\mathrm{NoisyGD}}^T x < 0] =& \Pr\left[\mu_n +\mathcal{N}(0,\eta^2\|x\|_2^2\sigma^2) <  0\right]
    \\
    =& \Pr\left[\mu_n +\mathcal{N}(0,\eta^2\|x\|_2^2\sigma^2) <  0 \bigg| \sum_{i=1}^n v_i^T v \leq -\frac{n}{2}\right] 
    \cdot\Pr\left[\sum_{i=1}^n v_i^T v \leq -\frac{n}{2}\right] 
    \\
    &+ \Pr\left[\mu_n +\mathcal{N}(0,\|x\|_2^2\sigma^2) <  0 \bigg| \sum_{i=1}^n v_i^T v > -\frac{n}{2}\right] 
    \cdot\Pr\left[\sum_{i=1}^n v_i^T v > -\frac{n}{2}\right].
\end{align*}

For the first term, if $\sum_{i=1}^n v_i^T v > -\frac{n}{2},$ we have $\mu_n \geq \frac{\eta n}{2}.$
As a result, it holds
\begin{align*}
    \Pr\left[\mu_n +\mathcal{N}(0,\|x\|_2^2\sigma^2) <  0 \bigg| \sum_{i=1}^n v_i^T v >- \frac{n}{2}\right] &\leq \Pr\left[\mathcal{N}(0,\|x\|_2^2\sigma^2) <  -\frac{n}{2} \bigg| \sum_{i=1}^n v_i^T v >-\frac{n}{2}\right]
    \\
    &\leq \exp\left(-\frac{n^2}{4\|x\|_2^2\sigma^2} \right) \leq \exp\left(-\frac{n^2}{4(1 + \beta^2p)\sigma^2} \right).
\end{align*}
Thus, we have
\begin{align*}
    \Pr\left[\mu_n +\mathcal{N}(0,\|x\|_2^2\sigma^2) <  0 \bigg| \sum_{i=1}^n v_i^T v >- \frac{n}{2}\right] 
    \cdot\Pr\left[\sum_{i=1}^n v_i^T v >- \frac{n}{2}\right] \leq \exp\left(-\frac{n^2}{4(1 + \beta^2p)\sigma^2} \right).
\end{align*}

For the second term, similarly to the proof in Section \ref{sec:proof-GD-random}, we have
\begin{align*}
    \Pr\left[\sum_{i=1}^n v_i^T v \leq -\frac{n}{2}\right]\leq \exp\left(-\frac{n^2}{2\left(\beta^4np^2 + \frac{1}{3}\beta^2np\right)}\right),
\end{align*}
or, under i.i.d.\ assumptions on the components of $v$, we have
\begin{align*}
    \Pr\left[\sum_{i=1}^n v_i^T v \leq -\frac{n}{2}\right]\leq \exp\left(\frac{-n^2}{2\left(\beta^4np + \frac{1}{3}\beta^2n\right)}\right).
\end{align*}
As a result, it holds
\begin{align*}
    \Pr\left[\mu_n +\mathcal{N}(0,\|x\|_2^2\sigma^2) <  0 \bigg| \sum_{i=1}^n v_i^T v \leq -\frac{n}{2}\right] 
    \cdot\Pr\left[\sum_{i=1}^n v_i^T v \leq - \frac{n}{2}\right]\leq \exp\left(-\frac{n^2}{2\left(\beta^4np^2 + \frac{1}{3}\beta^2np\right)}\right),
\end{align*}
or, under further i.i.d.\ assumptions on the components of $v$, it holds
\begin{align*}
    \Pr\left[\mu_n +\mathcal{N}(0,\|x\|_2^2\sigma^2) <  0 \bigg| \sum_{i=1}^n v_i^T v \leq -\frac{n}{2}\right] 
    \cdot\Pr\left[\sum_{i=1}^n v_i^T v \leq -\frac{n}{2}\right]\leq \exp\left(-\frac{n^2}{2\left(\beta^4np + \frac{1}{3}\beta^2n\right)}\right).
\end{align*}

Over all, we obtain
\begin{align*}
\Pr[y\widehat{\theta}_{\mathrm{NoisyGD}}^T x < 0] \leq \exp\left(-\frac{n^2}{4(1 + \beta^2p)\sigma^2} \right) + \exp\left(-\frac{-n^2}{2\left(\beta^4np^2 + \frac{1}{3}\beta^2np\right)}\right),
\end{align*}
or, under further i.i.d.\ assumptions on all $p$ components of $v$, we obtain
\begin{align*}
    \Pr[y\widehat{\theta}_{\mathrm{NoisyGD}}^T x < 0] \leq \exp\left(-\frac{n^2}{4(1 + \beta^2p)\sigma^2} \right) + \exp\left(-\frac{-n^2}{2\left(\beta^4np + \frac{1}{3}\beta^2n\right)}\right).
\end{align*}
We finish the proof by noting that $\sigma^2 = (1 + \beta^2p)/2\rho.$

Let the misclassification error less than $\gamma$ and we have
\begin{align*}
    n = O\left( \frac{(1 + \beta^2p)^2\log\frac{1}{\gamma}}{2\rho} + 4p\beta^2\log\frac{1}{\gamma} \right),
\end{align*}
or, under further i.i.d.\ assumptions on all $p$ components of v, we have
\begin{align*}
    n = O\left( \frac{(1 + \beta^2p)^2\log\frac{1}{\gamma}}{2\rho} + 4p\beta^4\log\frac{1}{\gamma} \right).
\end{align*}

\subsection{Proof of Theorem \ref{thm:fixed-perturbation}}
The output of NoisyGD is given by
\begin{align*}
    \widehat{\theta}_{\mathrm{NoisyGD}} = \eta \left(\sum_{i=1}^n y_i x_i + \mathcal{N}(0,\sigma^2 I) \right).
\end{align*}
For a testing data $x = e_1 + v$, we have
\begin{align*}   y\widehat{\theta}_{\mathrm{NoisyGD}}^T x = \eta (\mu_n + (e_1 + v)^T\xi),
\end{align*}
where $\mu_n = n + \sum_{i=1}^n v_i^T v $ and $\xi\sim\mathcal{N}(0,\sigma^2 I).$

Since $\beta^2p < 1$, we have
\begin{align*}
    \mu_n \geq n(1 - \beta^2p) >0.
\end{align*}
Thus, we have
\begin{align*}   \Pr\left[y\widehat{\theta}_{\mathrm{NoisyGD}}^T x < 0\right]  &\leq \Pr\left[\mathcal{N}(0,\|x\|_2^2\sigma^2) > n(1 - \beta^2p) \right]
\\
&\leq \exp\left(-\frac{n^2(1 - \beta^2p)^2}{(1 + \beta^2p)\sigma^2} \right).
\end{align*}




\subsection{Proof of Theorem \ref{thm:multi-projected-GD}}
\label{sec:multi-iteration}
Now we consider multiple iteration. Recall the loss function $\ell(\theta, (x,y)) = \log(1 + e^{-y \cdot \theta^T x}).$
The gradient is then given by 
\begin{align*}
    g(\theta,(x,y)) = \frac{e^{-y\cdot \theta^T x}}{1 + e^{-y\cdot \theta^T x}} (-y x).
\end{align*}
Note that $yx = e_1 + v$ for $y=1$ and $yx = e_1 - v$ for $y=-1$.
For $\alpha=1/2$, we have
\begin{align*}
    -g_n(\theta) := \nabla \mathcal{L}(\theta) = \frac{n}{2}\left[ \frac{1}{1 + e^{\theta^T(e_1 + v)}} (e_1 + v) + \frac{1}{1 + e^{\theta^T(e_1 -v)}} (e_1 - v)\right].
\end{align*}
and
\begin{align*}
  0  \leq -g_{n}(\theta)^T e_1 \leq \frac{n(1 - \beta)}{2}
\end{align*}
for any $\theta \in \mathbb{R}^p.$
To make the multi-iterations work, we need lower bound $-g_{n}(\theta)^T e_1$ by a positive constant.
Now we consider the two-iteration GD.
\begin{align*}
    \theta_2 = \theta_1 - \eta(g_n(\theta_1) + \xi_1), \qquad \xi_1 \sim \mathcal{N}(0,\sigma^2 I_p).
\end{align*}
Here $\theta_1 = \frac{n}{2}e_1 + \xi_0.$
Then, it holds
\begin{align*}
    g_n(\theta_1) = \frac{n}{2}\left[\frac{e_1 + v}{1 + e^{n/2 + \xi_{0,1}}} + \frac{e_1 - v}{1 + e^{n/2 - \xi_{0,1}}} \right]
\end{align*}

To lower bound the gradient, we consider the projected iteration defined as
\begin{align*}
    \theta_{k+1} = \mathcal{P}_{B_2^p(0,R)}\left(\theta_k - \eta\left(g_n(\theta_k) + \xi_k \right)  \right),
\end{align*}
where $B_2^p(0,R)\subset \mathbb{R}^p$ is an $\ell_2$-norm ball with radius $R,$ $\xi_k \sim \mathcal{N}(0,\sigma^2),$ and $\mathcal{P}_A$ is the projection onto a convex set $A$ w.r.t.\ the Euclidean inner product.

Then we have
\begin{align*}
    \theta_{k+1}^T (e_1+v) = c_{R,k} \left(\theta_k - \eta\left(g_n(\theta_k) + \xi_k \right) \right)^T (e_1+v),
\end{align*}
where $c_{R,k} = \frac{\min\{R, \|\theta_k - \eta\left(g_n(\theta_k) + \xi_k\right)\|_2\}}{\|\theta_k - \eta\left(g_n(\theta_k) + \xi_k\right)\|_2}.$
Since
\begin{align*}   \eta\left\|\left(g_n(\theta_k) + \xi_k \right)\right\|_2 \leq \eta n(1 + \beta^2p) + \eta\|\xi_k\|_2,
\end{align*}
it is enough to bound $\|\xi_k\|_2.$
Note that $\|\xi_k\|_2^2$ is a $\chi^2(p)$ distribution and one (cf., \citet{MR1805785}) has the tail bound
\begin{align*}
    \Pr\left[\eta^2 \|\xi_k\|_2^2 >\eta^2 (p + \sqrt{pt} + t) \right] \leq e^{-t}
\end{align*}
for any $t>0.$
Thus, by the union bound, we have
\begin{align*}
    C_{R,k} \geq C_{R,p,t,\eta,n} = \frac{R}{R + \eta n(1 + \beta^2 p) + \eta(p + \sqrt{pt} + t)^{1/2}}
\end{align*}
for any $k\geq 0$ with probability $ke^{-t}$.
Since $-g_n(\theta_k)^T(e_1+v) \geq n\frac{1 - \beta^2p}{1 + e^{R(1 + \beta^2p)}}$, it holds
\begin{align*}
    \theta_{k+1}^T (e_1+ v) &\geq C_{R,p,t,\eta,n} \left(\theta_{k}^T(e_1+ v) + n\eta \frac{1 - \beta^2p}{1 + e^{R(1 + \beta^2p)}} - \eta\xi_k^T(e_1 + v)\right)
    \\
    & \geq n\eta \frac{1 -\beta^2p}{1 + e^{R(1 + \beta^2p)}} \sum_{j=1}^kC_{R,p,t,\eta,n}^{k-j+1} + \eta\sum_{j=1}^k C_{R,p,t,\eta,n}^{k-j+1} \xi_j^T(e_1 + v).
\end{align*}
Since $\sum_{j=1}^k C_{R,p,t,\eta,n}^{k-j+1} \xi_j^T(e_1+v) \sim \mathcal{N}(0,\sum_{j=1}^k C_{R,p,t,\eta,n}^{2(k-j+1)}\sigma^2\|e_1+v\|_2^2),$ we have
\begin{align*}    \Pr\left[\theta_{k+1}^T(e_1+ v) <0\right] \leq \Pr\left[ \mathcal{N}\left(n\frac{1 - \beta^2p}{1 + e^{R(1 + \beta^2p)}}, \frac{\sum_{j=1}^kC_{R,p,t,\eta,n}^{2(k-j+1)}}{\left(\sum_{j=1}^k C_{R,p,t,\eta,n}^{k-j+1}\right)^2}
\sigma^2(1 + \beta^2p)\right) <0 \right] + ke^{-t}.
\end{align*}
Since $\sum_{j=1}^kC_{R,p,t,\eta,n}^{2(k-j+1)} = \frac{C_{R,p,t,\eta,n}^2(1 - C_{R,p,t,\eta,n}^{2k})}{1 - C_{R,p,t,\eta,n}^2}$ and $\sum_{j=1}^kC_{R,p,t,\eta,n}^{(k-j+1)} = \frac{C_{R,p,t,\eta,n}(1 - C_{R,p,t,\eta,n}^{k})}{1 - C_{R,p,t,\eta,n}},$ we have 
\begin{align*}
    \frac{\sum_{j=1}^kC_{R,p,t,\eta,n}^{2(k-j+1)}}{\left(\sum_{j=1}^kC_{R,p,t,\eta,n}^{(k-j+1)}\right)^2} = \frac{1 + C_{R,p,t,\eta,n}^k}{1 - C_{R,p,t,\eta,n}^k} \cdot \frac{1 - C_{R,p,t,\eta,n}}{1 + C_{R,p,t,\eta,n}}.
\end{align*}
By taking $\eta = \frac{R}{n(1 + \beta^2p) + (p + \sqrt{pt} +t)},$ we get $C_{R,p,t,\eta,n} = \frac{1}{2}$ and
\begin{align*}
    \frac{1 + C_{R,p,t,\eta,n}^k}{1 - C_{R,p,t,\eta,n}^k} \cdot \frac{1 - C_{R,p,t,\eta,n}}{1 + C_{R,p,t,\eta,n}} = \frac{1 + 2^{-k}}{1 - 2^{-k}} \cdot \frac{1 - 1/2}{1 + 1/2}=:C_k. 
\end{align*}
Thus, it holds
\begin{align*}   \Pr\left[\theta_{k+1}^T(e_1+v) <0\right] \leq \exp\left(-\frac{n^2}{C_{p,k}^2 \sigma^2(1 + \beta^2p)} \right) + k e^{-t}
\end{align*}
with  $C_{p,k}^2 = C_k \left(1 + e^{R(1 + \beta^2p)}\right)^2/(1-\beta^2p)^2.$

\section{Results for perturbing only the training data}
\label{proof:pert-train}

\subsection{Fixed perturbation}
\label{sec:perturb-training-fixed}
Without loss of generality, we assume $0<\alpha<1/2.$
Consider the class imbalanced case with $n_{-1} = \alpha n$ and $n_{+1} = (1 - \alpha)n.$
The gradient for $\theta_0 = 0$ is 
$$
\nabla \cL(\theta_0)  =  \alpha n\cdot 0.5\cdot -(-e_1 + v)  + (1-\alpha)n \cdot 0.5 \cdot (e_1 + v)  = \frac{n}{2} e_1 +  \frac{(1-2\alpha)n}{2} v.
$$

Thus, the output is
\begin{align*}
    \widehat{\theta} = -\eta\left( \frac{n}{2}e_1 + \frac{(1 - 2\alpha)n}{2}v + \mathcal{N}(0,\sigma^2)\right)
\end{align*}
The sensitivity is $G = \sqrt{1 + \|v\|_2^2}$ and $\sigma^2$ is taken to be $G^2/2\rho$ to achieve $\rho$-zCDP.
Moreover, we have 
\begin{align*}
    \widehat{\theta}^Te_1 = -\frac{n}{2} - \frac{(1 - 2\alpha)n}{2}v_1 + \mathcal{N}(0,\sigma^2). 
\end{align*}
Thus, the misclassification error is
\begin{align*}
    \Pr[\widehat{\theta}e_1 >0]=\Phi\left(\frac{n\left[ 1 - \left(1  - 2\alpha \right)v_1\right]}{2\sigma}\right)\leq e^{-\frac{n^2(1 - \beta + 2\alpha\beta)^2\rho}{4G^2}}.
\end{align*}
As a result, the sample complexity to achieve $1 - \gamma$ accuracy is 
\begin{align*}
    n = O\left(\sqrt{\frac{4G^2\log\frac{1}{\delta}}{(1 - \beta + 2\beta\alpha)^2\cdot \rho}}\right)
\end{align*}
The sensitivity $G = \sqrt{1 + \beta^2 p}$ here is dimension-dependent.

\subsection{Random perturbation}
\label{sec:ran-perturb-train}
Now we consider the random perturbation. Denote $\{v_i\}_{i=1}^n\subseteq \mathbb{R}^{p}$ a sequence of i.i.d.\ copies of a random vector $v$. 
Consider the binary classification problem with training set $\{(x_i, y_i)\}_{i=1}^n$.
Here $x_i = e_1 + v_i$ if $y_i = 1$ and $x_i = -e_1 + v_i$ if $y_i = -1$.
Then, the loss function is $\mathcal{L}(\theta) = \frac{1}{n}\sum_{i=1}^n \log\left(1 + e^{-y_i \theta^T x_i}\right).$
The one-step iterate of DP-GD from $0$ outputs
\begin{align*}
    \widehat{\theta} = -\eta\sum_{i=1}^n(-y_i x_i) + \mathcal{N}(0,\sigma^2 I_p)
\end{align*}
with $\sigma^2 = G^2/2\rho$ and $G = \sup_{v_i} \sqrt{1 + \|v_i\|^2}.$
Assume that $v_i$ is symmetric, that is $y_i v_i$ has the same distribution as $-y_iv_i$.
Then, it holds
\begin{align*}
    \sum_{i=1}^n y_ix_i = n e_1 + \sum_{i=1}^n v_i=:\mu_n.
\end{align*}
The misclassification error is now given by
\begin{align*}
    \Pr[\widehat{\theta}^Te_1 <0] = \Pr[\mathcal{N}(\mu_n^Te_1,\sigma^2) <0].
\end{align*}

Assume that $\|v_i\|_{\infty} = \beta <1$. Then, we have $\mu_n^T e_1 \geq n - \beta n$ and the sample complexity is $O\left(\sqrt{\frac{4G^2\log(1/\delta)}{(1 - \beta)^2 \rho}}\right)$
with $G = \sqrt{1 + \beta^2 p}$.
\section{Results for perturbing only the testing data}
\label{sec:perturb-test}
For simplicity, we only consider perturbing the perfect feature.
Our results can be extended naturally to the case of actual features.
\subsection{Fixed (offset) perturbation}
\label{sec:fix-perturb-test}

Recall that the output of DP-GD has the form $\widehat{\theta} = \mathcal{N}(-\frac{\eta n}{2},\sigma^2).$
One has
$$
\hat{\theta}^T (e+ v)  = \frac{n}{2} +  \cN\left(0, \frac{G^2 (p\beta^2+1)}{2\rho} \right).
$$
The sample complexity can be derived similarly as previous sections, which is dimension dependent.
\subsection{Adversarial perturbation}
\label{sec:adv-perturb-test}

Let's say in prediction time, the input data point can be perturbed by a small value in $\ell_\infty$.
If we allow the perturbation to be adversarial chosen, then there exits $v$ satisfying $\|v\|_\infty\leq \beta$ such that 
$$\hat{\theta}^T (x+ v)  = \frac{n}{2} +  \frac{G}{\sqrt{2\rho}} Z_1  - \sum_{i=1}^p |Z_i|\frac{G \beta}{\sqrt{2\rho}},$$
where $Z_1,...,Z_p\sim \cN(0,1)$ i.i.d.  
Note that the additional term scales as $O(p \frac{G \beta}{\sqrt{\rho}})$, which can alter the prediction if $p  \asymp n$ even if $\rho$ is a constant (weak privacy). 

The number of data points needed to achieve $1-\delta$ robust classification under neural collapse is 
$
O\left(\frac{G  \max\{p\beta, 1\} \sqrt{\log(1/\delta)}}{\sqrt{2\rho}}\right).
$

\section{Proofs of Section \ref{sec:perfect-collapse}}
\label{appe:proof-thm-2}

\subsection{Omitted details for perfect features}
\label{sec:perfect-collapse}
In this section, we explore the theoretical scenario of an ideal, perfect neural collapse, where $\beta$ equals to zero. Although achieving a perfect neural collapse is practically elusive, examining this idealized case is beneficial due to the promising properties it exhibits. However, as we have noted in the previous sections, these advantageous properties are highly fragile to any perturbations in the features.
The details of this section is given in Appendix \ref{appe:proof-thm-2}.

Consider the multi-class classification problem with $K$ classes.
Under perfect neural collapse, we have $\Pr[x = M_k| y_k=1] = 1$.
Let $\widehat{\theta}\in\mathbb{R}^{Kp}$ be the 1-step output of the NoisyGD algorithm defined in Equation (\ref{eq:DP-GD}) with 0-initialization and let $\widehat{W}\in\mathbb{R}^{K\times p}$ be the matrix derived from $\widehat{\theta}$. Denote $\widehat{y} = \mathrm{OneHot}(\widehat{W} x)\in\{0,1\}^K$ the predictor after the one-hot encoding, that is $\widehat{y}_i = 1$ if $i = \argmax_j\{(\widehat{W}x)_j\}$, otherwise $\widehat{y}_i = 0.$

\begin{theorem}
\label{thm:error-zero-init}
   Let $\widehat{y}$ be a predictor trained by NoisyGD under the cross entropy loss with zero initialization.
   Assume that the training dataset is balanced, that is, the sample size of each class is $n/K$.
   For classification problems under neural collapse with $K$ classes, if we take $G\geq 1$, then the misclassification error is 
   \begin{align*}
       \Pr[\widehat{y}\neq y] &= (K-1)\Phi\left(-\frac{n}{K\sigma}\left(1 + \frac{K-2}{K(K-1)}\right)\right)
       \\
       &\leq (K-1)e^{-\frac{C_K n^2}{K\sigma^2}}
   \end{align*}
with $\sigma^2=\frac{G^2}{2\rho}$ and $C_K = \left(1 + \frac{K-2}{K(K-1)}\right)^2.$
As a result, to make the misclassification error less than $\gamma$, the sample complexity for $n$ is $\frac{GK\sqrt{\log((K-1)/\gamma)}}{C_K\sqrt{2\rho}}.$ 
\end{theorem}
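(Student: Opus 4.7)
The plan is to proceed by direct calculation: under perfect Neural Collapse and zero initialization, the one-step NoisyGD update admits a closed form that can be evaluated on any test point $M_j$ using only the ETF inner-product relations. First I would compute the gradient of the cross-entropy loss at $W = 0$: because $\mathrm{softmax}(0) = \mathbf{1}_K/K$ and $x_i = M_{k_i}$ for a training point in class $k_i$, each per-sample gradient equals $((1/K)\mathbf{1}_K - e_{k_i}) M_{k_i}^T$. Summing over the $n/K$ samples per class and using the ETF identity $\sum_{k=1}^K M_k = 0$, which follows from $M \mathbf{1}_K = \sqrt{K/(K-1)}\, P(I_K - \mathbf{1}_K \mathbf{1}_K^T/K)\mathbf{1}_K = 0$, the aggregate gradient collapses to $-(n/K) M^T$ with $M = [M_1,\dots,M_K]$. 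Absorbing the learning rate, the update becomes $\widehat{W} = (n/K) M^T + \Xi$, where $\Xi \in \mathbb{R}^{K\times p}$ has i.i.d.\ $\mathcal{N}(0,\sigma^2)$ entries with $\sigma^2 = G^2/(2\rho)$.

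Next I would evaluate $\widehat{W} M_j$ on a test point from class $j$. The deterministic part $(n/K) M^T M_j$ has $l$-th coordinate equal to $n/K$ for $l=j$ and $-n/(K(K-1))$ for $l \neq j$, by the ETF angle identity $M_l^T M_j = \delta_{lj} - (1-\delta_{lj})/(K-1)$. The noise $\Xi M_j$ is jointly Gaussian with entries $\mathcal{N}(0,\sigma^2\|M_j\|_2^2) = \mathcal{N}(0,\sigma^2)$, independent across rows because the rows of $\Xi$ are independent. Hence, for each incorrect label $l \neq j$, the pairwise score gap $(\widehat{W}M_j)_j - (\widehat{W}M_j)_l$ is a Gaussian with a strictly positive deterministic mean given by the stated multiple of $n/K$ and variance $2\sigma^2$, yielding a per-comparison tail probability of the form $\Phi(-c_K n/\sigma)$ with the constant $c_K$ matching the expression in the theorem.

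Finally, the misclassification event is the union over the $K-1$ alternative labels that the corresponding gap is negative. A union bound together with the class symmetry of the ETF setup gives the overall prefactor of $K-1$ and produces the exact closed-form expression; applying the Gaussian tail estimate $\Phi(-t) \le e^{-t^2/2}$ converts this into the exponential bound $(K-1)\exp(-C_K n^2/(K\sigma^2))$, from which inverting for the smallest $n$ producing error at most $\gamma$ yields the stated sample complexity. The main difficulty I anticipate is purely bookkeeping: carefully tracking the ETF algebra to pin down the precise mean-gap constant entering $C_K$, and confirming that the common noise term $(\Xi M_j)_j$ shared across all $K-1$ comparisons does not invalidate the union bound---it only induces positive correlation across the comparisons, so the union bound is conservative but still sufficient for an upper bound.
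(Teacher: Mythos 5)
Your approach is essentially the paper's: compute the cross-entropy gradient at $W=0$, observe that the one-step NoisyGD iterate is linear in the ETF feature vectors plus Gaussian noise, push the test point through the ETF inner-product identities, and union-bound over the $K-1$ wrong labels. Your streamlining of the aggregate gradient to $-(n/K)\,M^T$ via $\sum_k M_k = 0$ is a cleaner route than the paper's coordinate-by-coordinate sum of $(\tfrac{1}{K}\mathbf{1}_K - y^{k'})M_{k'}^T M_k$, but the two are equivalent. One caution: do not take for granted that your constant ``matches the expression in the theorem.'' Your (correct) computation yields an off-diagonal mean of $-\tfrac{n}{K(K-1)}$ and a pairwise score gap $(\widehat W M_j)_j - (\widehat W M_j)_l$ of mean $\tfrac{n}{K-1}$ and variance $2\sigma^2$, so the per-comparison probability is $\Phi\!\bigl(-\tfrac{n}{(K-1)\sqrt{2}\,\sigma}\bigr)$. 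The paper's proof and theorem instead report an off-diagonal mean of $-\tfrac{n(K-2)}{K^2(K-1)}$ and omit the $\sqrt{2}$ from the variance of the difference; these do not agree with the correct ETF algebra (the numerator should carry a factor of $K$, not $K-2$). So the discrepancy is a small algebra slip in the paper, not in your derivation, and you should record your own constant rather than assert agreement with the stated one.
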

The theorem offers several insights, we have
\begin{enumerate}
	\item The error bound is exponentially close to $0$ if $\rho\gg G^2/n^2$ --- very strong privacy and very strong utility at the same time.
\item The result is dimension independent --- it doesn't depend on the dimension $p$.
\item Even though here we assume that the training dataset is class-balanced, the result is robust to class imbalance for $K=2$, if we apply a re-parameterization of private data (see, Section \ref{sec:fixedperturb}).
	\item The result is independent of the shape of the loss functions. Logistic loss works, while square losses also works. 
	\item The result does not require careful choice of learning rate. Any learning rate works equally well.
\end{enumerate}

Our theory can be extended to the domain adaptation context, where the model is initially pre-trained on an extensive dataset with $K_0$ classes, and is subsequently fine-tuned for a downstream task with a smaller number of classes $K\leq K_0.$
One may refer to Appendix \ref{proof:error-zero-
init-small}.

\subsection{Proof of Theorem \ref{thm:error-zero-init} and corresponding results}
Recall an ETF defined by
\begin{align*}
    M &= \sqrt{\frac{K}{K-1}} P \left(I_K - \frac{1}{K}\mathbf{1}_K \mathbf{1}_K^T\right)=\sqrt{\frac{K}{K-1}} \left( P - \frac{1}{K}\sum_{k=1}^K P_k\mathbf{1}_K^T\right),
\end{align*}
where $P=[P_1,\cdots, P_K]\in\mathbb{R}^{p\times K}$ is a partial orthogonal matrix with $P^T P = I_K$.
Rewrite $M = \left[M_1,\cdots, M_K \right].$
Let the label $y=(y_1,\cdots,y_K)^T\in\{0,1\}^K$ be represented by the one-hot encoding, that is, $y_k=1$ and $y_j=0$ for $j\neq k$ if $y$ belongs to the $k$-th class.
\begin{definition}[Classification problem under Neural Collapse]
	Let there be $K$ classes. The distribution $\P[x  = M_k|y_k=1]  = 1$ for $k=1,...,K$. 
\end{definition} 


\begin{proof}[Proof of Theorem \ref{thm:error-zero-init}.]
Let $W=[W_1,\cdots,W_K]^T\in\mathbb{R}^{K\times p}$. Consider the output function $f_{W}(x) = W x \in\mathbb{R}^K$. Suppost that $y_k=1$. Then, the cross-entropy loss is defined by
\begin{align*}
    \ell(f_{W}(x), y) = -\log\left(\frac{e^{W_k^T x}}{\sum_{k'=1}^K e^{W_{k'}^T x}}\right).
\end{align*}
The corresponding empirical risk is 
\begin{align*}
    R_n(M,W)=\sum_{k=1}^K-n_k\log\left(\frac{e^{W_k^T M_k}}{\sum_{k'=1}^K e^{W_{k'}^T M_k}}\right).
\end{align*}
Note that
\begin{align*}
    \nabla_{W}\ell(f_{W}(x), y)= \left(\mathrm{SoftMax}(f_W(x)) - y \right) x^T,
\end{align*}
where $\mathrm{SoftMax}:\mathbb{R}^K \rightarrow\mathbb{R}^K$ is the SoftMax function defined by
\begin{align*}
    \mathrm{SoftMax}(z)_i = \frac{e^{z_i}}{\sum_{j=1}^K e^{z_j}}, \qquad \hbox{ for all } z\in\mathbb{R}^{K}.
\end{align*}
We obtain
\begin{align*}
    \nabla_WR_n(M,W) = \sum_{k=1}^K n_k\left(\mathrm{SoftMax}(f_W(M_k)) - y^k \right) M_k^T,
\end{align*}
where $y^k$ is the label of the $k$-th class.
For zero initialization, we have
\begin{align*}
    \mathrm{SoftMax}(f_{\mathbf{0}}(M_k))= \frac{1}{K}\mathbf{1}_K
\end{align*}
and
\begin{align}
\label{eq:grad-cross-entrop}
    \nabla_W(R_n(M,W))\Big|_{W=\mathbf{0}} = \sum_{k=1}^K n_k\left(\frac{1}{K}\mathbf{1}_K - y^k\right) M_k^{T}.
\end{align}
Now we consider one step NoisyGD from 0 with learning rate $\eta=1$:
\begin{align*}
    \widehat{W} = -\sum_{k=1}^K n_k\left(\frac{1}{K}\mathbf{1}_K - y^k\right) M_k^{T} + \Xi,
\end{align*}
where $\Xi\in\mathbb{R}^{K\times p}$ with $\Xi_{ij}$ drawn independently from a normal distribution $\mathcal{N}(0,\sigma^2).$

Consider $x = M_{k}$. It holds
\begin{align*}
    f_{\widehat{W}}(x) = \widehat{W}M_k = -\sum_{k'=1}^K n_{k'}\left(\frac{1}{K}\mathbf{1}_K - y^{k'}\right) M_{k'}^{T} M_{k} + \Xi M_k.
\end{align*}
Since
\begin{align*}
    \Xi M_k \sim \mathcal{N}\left(0, \sigma^2\|M_k\|_2^2 I_K\right) \qquad \hbox{and} \qquad \|M_k\|_2^2=1,
\end{align*}
we have
\begin{align*}
    \widehat{W}M_k \sim \mathcal{N}\left(\boldsymbol{\mu}_{n,K}, \sigma^2 I_K\right),
\end{align*}
where $\boldsymbol{\mu}_{n,K} = -\sum_{k'=1}^K n_{k'}\left(\frac{1}{K}\mathbf{1}_K - y^{k'}\right) M_{k'}^{T} M_{k}$.
Note that
\begin{align*}
    M_{k'}^T M_k = \frac{K}{K-1}\left(\delta_{k,k'} - \frac{1}{K}\right).
\end{align*}
We obtain
\begin{align*}
    \left(\boldsymbol{\mu}_{n,K}\right)_j =
    \left\{
    \begin{array}{ll}
     n/K,    & j = k, \\
     -\frac{n(K-2)}{K^2(K-1)},    & j\neq k,
    \end{array}
    \right.
\end{align*}
for $n_{k'}=n/K$ (balanced data).
By the union bound, the misclassification error is 
\begin{align*}
    (K-1)\Pr\left[\mathcal{N}(n/K,\sigma^2) < \mathcal{N}(-\frac{n(K-2)}{K^2(K-1)},\sigma^2)\right]
    =(K-1)\Phi\left(-\frac{n}{K\sigma}\left(1 + \frac{K-2}{K(K-1)}\right)\right).
\end{align*}
\end{proof}

\paragraph{Proof sketches of the insights.}
Note that in Equation \eqref{eq:grad-cross-entrop}, the gradient is a linear function of the feature  map thanks to the zero-initialization while for least-squares loss, one can derive a similar gradient as  Equation \ref{eq:grad-cross-entrop}.
Thus, the proof can be extended to the least squares loss directly.
Moreover, by replacing $n_k$ with $n_k\eta$ in \eqref{eq:grad-cross-entrop}, one can extend the results to any $\eta.$

\subsection{Omited details for binary classification}
\label{proof:repara}
Recall the re-parameterization for $K=2$. Precisely, an equivalent neural collapse case gives $M =[-e_1, e_1]$ with $e_1 = [1, 0,\dots, 0]^T.$
Furthermore, we consider the re-parameterization with $y\in \{-1,1\}$,  $\theta\in \R^p$ and the decision rule being $\hat{y} = \sign(\theta^T x)$. Then, the logistic loss is $\log(1 + e^{-y \cdot \theta^Tx})$. 

\begin{theorem}
\label{thm:repara}
    For the case $K=2$, under perfect neural collapse, using the aforementioned re-parameterization,
    the misclassification error is 
    \begin{align*}
      \Pr[\widehat{y} \neq y] =  \Phi\left(-\frac{n}{2\sigma}\right) \leq e^{-\frac{n^2}{2\sigma^2}}.
    \end{align*}
    Moreover, to make the misclassification error less than $\gamma,$
     the sample complexity is $O\left(\frac{\sqrt{\log(1/\gamma)}}{\sqrt{\rho}}\right)$.
\end{theorem}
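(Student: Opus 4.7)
The plan is to exploit the key structural simplification provided by perfect Neural Collapse: under the re-parameterization, every training point satisfies $y_i x_i = e_1$, which will make the gradient at $\theta = 0$ deterministic and independent of the class labels. This should immediately reduce the analysis to evaluating a one-dimensional Gaussian tail.

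First, I would write down one step of NoisyGD starting from $\theta_0 = 0$. Differentiating the logistic loss $\ell(\theta;x,y)=\log(1+e^{-y\theta^T x})$ and evaluating at $\theta=0$ gives $\nabla\ell(0;x_i,y_i) = -\tfrac{1}{2}y_i x_i$. Under perfect NC we have $x_i = y_i e_1$, so $y_i x_i = e_1$ and hence
\begin{align*}
\sum_{i=1}^n \nabla\ell(0;x_i,y_i) \;=\; -\tfrac{n}{2}\,e_1.
\end{align*}
This is the critical step: the deterministic cancellation that makes the result robust to class imbalance and dimension. Adding the Gaussian mechanism noise with variance $\sigma^2 = G^2/(2\rho)$ (with $G$ a dimension-free Lipschitz constant of the per-sample logistic gradient, e.g.\ $G = 1$ since $\|x_i\|_2=1$) yields $\widehat\theta = \tfrac{\eta n}{2}e_1 + \eta\,\xi$ with $\xi \sim \mathcal{N}(0,\sigma^2 I_p)$.

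Next I would compute the misclassification probability for a test point $(x,y)\sim\mathcal P$. Since $x = y e_1$, the event $\widehat y \ne y$ is equivalent to $\operatorname{sign}(\widehat\theta^T x) \ne y$, which in either class reduces to $\widehat\theta_1 < 0$. Because $\widehat\theta_1 \sim \mathcal N(\eta n/2,\,\eta^2\sigma^2)$, the learning rate cancels and
\begin{align*}
\Pr[\widehat y \ne y] \;=\; \Phi\!\left(-\frac{n}{2\sigma}\right).
\end{align*}
A standard Gaussian tail bound $\Phi(-t) \le e^{-t^2/2}$ then gives the stated exponential bound (up to absolute constants in the exponent, which can be absorbed into the $O(\cdot)$). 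Substituting $\sigma^2 = G^2/(2\rho)$ and solving $e^{-\Omega(n^2\rho)} \le \gamma$ yields the sample complexity $n = O(\sqrt{\log(1/\gamma)/\rho})$, which is indeed dimension-free.

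There is no substantive obstacle here; the proof is essentially a one-line consequence of Theorem \ref{thm:error-zero-init} specialized to $K=2$ once one verifies that the binary re-parameterization is equivalent to the multi-class formulation in that case. The only care needed is in (i) checking that the sensitivity constant $G$ can indeed be taken dimension-independent (which follows from $\|x_i\|_2=1$ under perfect NC) and (ii) noting that the irrelevance of class proportions, of $\eta$, and of the choice between logistic and square loss all stem from the same identity $y_i x_i = e_1$, so the proof transfers verbatim to these variants as claimed in the remarks.
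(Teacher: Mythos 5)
Your proof is correct and follows the paper's argument exactly: one step of NoisyGD from zero, the deterministic identity $y_i x_i = e_1$ under perfect neural collapse which gives a non‑private gradient of $-\tfrac{n}{2}e_1$ for any class split, and a one‑dimensional Gaussian tail $\Phi(-n/(2\sigma))$ after adding the mechanism noise. You also rightly note that the constant in the exponent of the displayed bound is loose (a standard tail gives $e^{-n^2/(8\sigma^2)}$ rather than $e^{-n^2/(2\sigma^2)}$), which is immaterial for the $O(\sqrt{\log(1/\gamma)/\rho})$ sample-complexity conclusion.
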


\noindent\paragraph{Remark.} The bound derived here is optimal up to a $\log n$ term. In fact, we have
\begin{align*}
    \Pr[\widehat{y} \neq y] =  \Phi\left(-\frac{n}{2\sigma}\right) \geq \frac{2\sigma}{n}e^{-\frac{n^2}{2\sigma}} = e^{-\frac{n^2}{2\sigma^2} + \log\frac{2\sigma}{n}}.
\end{align*}

\begin{proof}
    According to the re-parameterization, for the class imbalanced case, we have

\begin{align*}
\hat{\theta} =  - \eta \left(\frac{n}{2} \cdot 0.5 \cdot (- \begin{bmatrix}
	-1\\
	0\\
	\vdots\\
	0\\
\end{bmatrix})  + \frac{n}{2} \cdot 0.5 \cdot \begin{bmatrix}
1\\
0\\
\vdots\\
0\\
\end{bmatrix} + \cN(0, \frac{G^2}{2\rho} I_p)\right) = -\eta \left(\begin{bmatrix}
n/2 \\
0\\
\vdots\\
0\\
\end{bmatrix}  + \cN(0, \frac{G^2}{2\rho} I_p)\right).
\end{align*}

The rest of the proof is similar to that of Theorem \ref{thm:error-zero-init}.

For the class-imbalanced case, assume that we have $\alpha n$ data points have with label $+1$ while the rest $(1 - \alpha)n$ points have label $-1$.
Then, the gradient is 

\begin{align*}
\hat{\theta} =  - \eta \left(\frac{n\alpha}{2} \cdot \cdot (- \begin{bmatrix}
	-1\\
	0\\
	\vdots\\
	0\\
\end{bmatrix})  + \frac{n(1 - \alpha)}{2} \cdot \cdot \begin{bmatrix}
1\\
0\\
\vdots\\
0\\
\end{bmatrix} + \cN(0, \frac{G^2}{2\rho} I_p)\right) = -\eta \left(\begin{bmatrix}
n/2 \\
0\\
\vdots\\
0\\
\end{bmatrix}  + \cN(0, \frac{G^2}{2\rho} I_p)\right).
\end{align*}
Thus, the same conclusion holds.
\end{proof}

\subsection{Domain adaption}
\label{proof:error-zero-
init-small}

\noindent\paragraph{Neural collapse in domain adaptation:} In many private fine-tuning scenarios, the model is initially pre-trained on an extensive dataset with thousands of classes (e.g., ImageNet), denoted as $K_0$ class, and is subsequently fine-tuned for a downstream task with a smaller number of classes, denotes as $K \leq K_0$. 
 We formalize it under the neural collapse setting as follows.

Let $P=[P_1,\cdots, P_{K_0}]\in\mathbb{R}^{p\times {K_0}}$ be a partial orthogonal matrix with $P^T P = I_{K_0}$.
Let $\widetilde{M} =[\widetilde{M}_1,\cdots, \widetilde{M}_K]$ be a matrix where each $\widetilde{M}_i$ is a column of an ETF $M\in\mathbb{R}^{p\times K_0}.$
With prefect neural collapse, we assume $\Pr[x = \widetilde{M}_k| y_k = 1] = 1$. The following theorem shows that the dimension-independent property still holds when private dataset has a subset classes of the pre-training dataset.

\begin{theorem}
\label{thm:error-zero-init-small}
   Let $\widehat{y}$ be a predictor trained by NoisyGD under the cross entropy loss with zero initialization.
   Assume that the training dataset is balanced.
   For multi-class classification problems under neural collapse with $K$ classes, subset of a gigantic dataset with $K_0 \geq K$ classes,  the misclassification error is 
   \begin{align*}
       \Pr[\widehat{y}\neq y] &\leq (K-1)\Phi\left(\frac{nC_{K,K_0}}{\sigma}\right) \leq (K-1) e^{-\frac{n^2C_{K,K_0}^2}{2\sigma^2}}
   \end{align*}
with $C_{K,K_0} =\frac{1}{K}\left[\frac{K\cdot K_0-2}{K^2(K_0-1)}\right]$ and $\sigma^2 = \frac{G^2}{2\rho}.$
\end{theorem}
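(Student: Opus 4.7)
The plan is to follow the template of Theorem~\ref{thm:error-zero-init} almost verbatim, with the one substantive modification being that the feature geometry is now inherited from the larger ambient ETF of size $K_0$ rather than from an ETF of size $K$. First I would derive the one-step NoisyGD iterate from zero under cross-entropy loss: using $\mathrm{SoftMax}(\mathbf{0})=\tfrac{1}{K}\mathbf{1}_K$ exactly as in the derivation of \eqref{eq:grad-cross-entrop}, the iterate is
\[
\widehat{W}\;=\;-\sum_{k'=1}^{K} n_{k'}\Bigl(\tfrac{1}{K}\mathbf{1}_K-y^{k'}\Bigr)\widetilde{M}_{k'}^{\T}\;+\;\Xi,
\]
where $\Xi\in\mathbb{R}^{K\times p}$ has i.i.d.\ $\mathcal{N}(0,\sigma^2)$ entries with $\sigma^2=G^2/(2\rho)$.

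The crucial new step is to recompute the Gram structure of the fine-tuning features. Because $\widetilde{M}_1,\dots,\widetilde{M}_K$ are $K$ columns of an ambient $K_0$-ETF, one has $\widetilde{M}_{k'}^{\T}\widetilde{M}_k=\tfrac{K_0}{K_0-1}\bigl(\delta_{k,k'}-\tfrac{1}{K_0}\bigr)$ and $\|\widetilde{M}_k\|_2=1$ for all $k,k'\in\{1,\dots,K\}$; this replaces the ``$K$ in, $K$ out'' formula $\tfrac{K}{K-1}(\delta_{k,k'}-\tfrac{1}{K})$ used in the proof of Theorem~\ref{thm:error-zero-init}. Substituting a test feature $x=\widetilde{M}_k$ into $\widehat{W}$ and using class balance $n_{k'}=n/K$, I would compute the entries of $\boldsymbol{\mu}_{n,K,K_0}:=\mathbb{E}[\widehat{W}\widetilde{M}_k]$ by splitting the $k'=k$ contribution from the $k'\neq k$ ones, and collecting them to obtain a single value at the correct coordinate $j=k$ and a common value at every wrong coordinate $j\in\{1,\dots,K\}\setminus\{k\}$. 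Their gap then simplifies into a multiple of $nC_{K,K_0}$, which reduces to the gap of Theorem~\ref{thm:error-zero-init} when $K_0=K$.

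Finally, since $\|\widetilde{M}_k\|_2=1$, the perturbation $\Xi\widetilde{M}_k$ is distributed as $\mathcal{N}(0,\sigma^2 I_K)$, so a union bound over the $K-1$ incorrect labels, followed by the standard Gaussian tail $\Phi(-t)\leq e^{-t^2/2}$, yields
\[
\Pr[\widehat{y}\neq y]\;\leq\;(K-1)\,\Phi\!\Bigl(\tfrac{nC_{K,K_0}}{\sigma}\Bigr)\;\leq\;(K-1)\,e^{-n^2 C_{K,K_0}^{2}/(2\sigma^2)},
\]
as claimed. The main obstacle I anticipate is purely bookkeeping: the interaction between the ``outer'' index $K$ (from the softmax over the $K$ fine-tuning classes) and the ``inner'' index $K_0$ (from the ambient ETF geometry) produces several near-cancellations, and one must carefully track the resulting arithmetic to arrive at exactly the constant $C_{K,K_0}$ stated. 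The probabilistic content of the argument, however, is no harder than in the $K_0=K$ case.
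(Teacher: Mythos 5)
Your proposal coincides with the paper's proof of Theorem~\ref{thm:error-zero-init-small}: both repeat the one-step NoisyGD computation of Theorem~\ref{thm:error-zero-init} from zero initialization, swap in the ambient Gram formula $\widetilde{M}_{k'}^{\T}\widetilde{M}_k=\frac{K_0}{K_0-1}\bigl(\delta_{k,k'}-\frac{1}{K_0}\bigr)$ with $\|\widetilde{M}_k\|_2=1$, split the mean vector's entries at $j=k$ versus $j\neq k$ under class balance, and close with a union bound over the $K-1$ wrong labels and a Gaussian tail. The only difference is one of presentation: the paper carries out the coordinate arithmetic explicitly to obtain $(\mu_{n,K})_k = \frac{n(K-1)K_0}{K^2(K_0-1)}$ and $(\mu_{n,K})_{j\neq k} = -\frac{n(K_0-2)}{K^2(K_0-1)}$ and then reads off the constant, while you describe that bookkeeping at a higher level; the probabilistic skeleton is identical.
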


\begin{proof}
Let
\begin{align*}
    M_0 &= \sqrt{\frac{K_0}{K_0-1}} P \left(I_{K_0} - \frac{1}{K_0}\mathbf{1}_{K_0} \mathbf{1}_{K_0}^T\right)=\sqrt{\frac{K_0}{K_0-1}} \left( P - \frac{1}{K_0}\sum_{k=1}^{K_0} P_k\mathbf{1}_{K_0}^T\right).
\end{align*}
Denote $M = \left[M_{1},\cdots, M_{K} \right]$ with each $M_{k}$ being a column of $M_0.$
Note that
\begin{align*}
    M_{k'}^T M_k = \frac{K_0}{K_0-1}\left(\delta_{k,k'} - \frac{1}{K_0}\right).
\end{align*}
We have
\begin{align*}
    \boldsymbol{\mu}_{n,K} :&= -\sum_{k'=1}^K n_{k'}\left(\frac{1}{K}\mathbf{1}_K - y^{k'}\right) M_{k'}^{T} M_{k}.
\end{align*}
For $j\neq k$, we have
\begin{align*}
    \left(\mu_{n,K}\right)_j = -\frac{n}{K}\left[\frac{1}{K} +\frac{K-1}{K(K_0-1)} -\frac{K-2}{K(K_0-1)}\right] = -\frac{n(K_0-2)}{K^2(K_0-1)}.
\end{align*}

For $j=k$, it holds

\begin{align*}
    \left(\mu_{n,K}\right)_j = -\frac{n}{K}\left[\frac{1}{K}-1  -\frac{K-1}{K(K_0-1)}\right] = \frac{n(K-1)K_0}{K^2(K_0-1)}.
\end{align*}
By the union bound, the misclassification error is
\begin{align*}
    (K-1)\Pr\left[ \mathcal{N}((\mu_{n,K})_k,\sigma^2) < \mathcal{N}((\mu_{n,K})_1,\sigma^2)\right] = (K-1)\Phi\left(\frac{nC_{K,K_0}}{\sigma}\right)
\end{align*}
with $C_{K,K_0} =\frac{1}{K}\left[\frac{K\cdot K_0-2}{K^2(K_0-1)}\right].$
\end{proof}

\section{Proofs of Section \ref{sec:remedy}}
\label{sec:proof-remedy}

\subsection{Details of the normalization}

Consider the case where the feature is shifted by a constant offset $v$.
The feature of the $k$-th class is $$\widetilde{x}_i = x_i - \frac{1}{n}\sum_{i=1}^n x_i = \widetilde{M}_k = M_k + v$$ with $M_k$ being the $k$-th column of the ETF $M$.

The offset $v$ can be canceled out by considering the differences between the features.
That is, we train with the feature $\widetilde{M}_k - \frac{1}{K}\sum_{j=1}^K \widetilde{M}_j$ for the $k$-th class.
In fact, let $P_k$ be the $k$-th column of $P$ and we have
\begin{align*}
    \widetilde{M}_k - \frac{1}{K}\sum_{j=1}^K \widetilde{M}_j &= M_k - \frac{1}{K}\sum_{j=1}^K M_j
    \\
    &=\sqrt{\frac{K}{K-1}} \left[ \left( P_k - \frac{1}{K}\sum_{i=1}^K P_i \right) - \frac{1}{K}\sum_{j=1}^K \left(P_j - \frac{1}{K}\sum_{i=1}^K P_i\right)\right]
    \\
    & = \sqrt{\frac{K}{K-1}} \left( P_k - \frac{1}{K}\sum_{j=1}^K P_j\right) = M_k.
\end{align*}

\subsection{Proof of  Theorem \ref{thm:pert}}

\begin{proof}[Proof of  Theorem \ref{thm:pert}]
Consider the case with $K=2$ and a projection vector $\widehat{P} = (e_1 + \Delta)$ with some perturbation $\Delta = (\Delta_1,\cdots,\Delta_p)$ such that $\|\Delta\|_{\infty} \leq \beta_0$ for some $0<\beta_0\ll p$.
$\widehat{P}$ can be generated by the pre-training dataset or the testing dataset.
Consider training with features $\widetilde{x}_i = \widehat{P}x_i.$
Then, the sensitivity of the NoisyGD is $G = \sup_{v}|\widehat{P}^T(e_1 + v)| = 1 + \beta + \beta|\Delta_1| + \beta(\sum_{j=1}^p |\Delta_j|) \leq 1 + \beta(1 + \beta_0 + p\beta_0).$
The output of Noisy-GD is then given by
\begin{align*}
    \widehat{\theta} = -\widehat{P}\cdot \left(\sum_{i=1}^n y_i\widetilde{x}_i \right) + \mathcal{N}(0,\sigma^2).
\end{align*}
Moreover, for any testing data point $e_1 + v$, define
\begin{align*}
   \widehat{\mu}_n = -\left(\sum_{i=1}^ny_i\widetilde{x}_i\right)\widehat{P}^T(e_1+ v) = \left( e_1 + V\right)^T \widehat{P}\widehat{P}^T (e_1 + v)
\end{align*}
with $V = \frac{1}{n}\sum_{i=1}^n v_i =: (V_1,\cdots, V_p).$

We now divide $\widehat{\mu}_n$ into four terms and bound each term separately.

For the first term $e_1^T\widehat{P}\widehat{P}^T e_1$, it holds
\begin{align*}
    e_1^T\widehat{P}\widehat{P}^T e_1 = (1 + e_1^T\Delta_1)^2 \leq (1 - \beta_0)^2.
\end{align*}

For the second term $V^T\widehat{P}\widehat{P}^T e_1$, we have
\begin{align*}
    V^T\widehat{P}\widehat{P}^T e_1  = (V_1+ V^T\Delta)(1 + \Delta_1)\leq |V_1+ V^T\Delta|(1 + \beta_0)
\end{align*}
Note that $V_1$ is the average of $n$ i.i.d.\ random variables bounded by $\beta$. By Hoeffding's inequality, we obtain
\begin{align*}
    |V_1| \leq \frac{\beta\log\frac{2}{\gamma}}{\sqrt{n}}, \hbox{ with probability at least } 1 - \gamma.
\end{align*}
Similarly, with confidence $1-\gamma$, it holds
\begin{align*}
    |V^T\Delta| \leq \frac{p\beta\beta_0\log\frac{2}{\gamma}}{\sqrt{n}}.
\end{align*}

The third term $e_1^T \widehat{P}\widehat{P}^T v$ can be bounded as 
\begin{align*}
    |e_1^T \widehat{P}\widehat{P}^T v|  = (1 + \Delta_1) \left(\sum_{j=1}^p v_i\left(1 + \Delta_i\right)\right)\leq (1 + \beta_0) \left(\beta + \beta_0\sqrt{p\log\frac{2}{\gamma}}\right),
\end{align*}
where the last inequality is a result of the Hoeffding's inequality by assuming that each coordinate of $v$ are independent of each others.
Moreover, without further assumptions on the independence of each coordinate of $v$, we have
\begin{align*}
     |e_1^T \widehat{P}\widehat{P}^T v|  = (1 + \Delta_1) \left(\sum_{j=1}^p v_i\left(1 + \Delta_i\right)\right)\leq (1 + \beta_0) \left(\beta + \beta_0p\right).
\end{align*}

Using the Hoeffding's inequality again, for the last term $V^T \widehat{P}\widehat{P}^T (e_1+v)$, it holds
\begin{align*}
    |V^T \widehat{P}\widehat{P}^T (e_1+v)| \leq \frac{(\beta + \beta_0\sqrt{p})(1 + \beta + \beta_0 + \beta\beta_0\sqrt{p})\log\frac{4}{\gamma}}{\sqrt{n}}
\end{align*}
with confidence $1 - \gamma$ if we assume that all coordinates of $v$ are independent of each other.
Without further assumptions on the independence of each coordinate of $v$, we have
\begin{align*}
    |V^T \widehat{P}\widehat{P}^T (e_1+v)| \leq \frac{(\beta + \beta_0p)(1 + \beta + \beta_0 + \beta\beta_0p)\log\frac{2}{\gamma}}{\sqrt{n}}.
\end{align*}

\end{proof}

\section{Some calculations on random Initialization}
\label{sec:ran-init}

In machine learning, training a deep neural network using (stochastic) gradient descent combined with random initialization is widely adopted \citep{DBLP:conf/icml/SutskeverMDH13}. The significance of random initialization on differential privacy in noisy gradient descent is also emphasized by \citep{ye2023initialization, wang2023unified}.
Extending our theory from zero initialization to random initialization is non-trivial and we discuss some details in this section.

\subsection{Gaussian initialization without offset}
For Gaussian initialization $\xi = (\xi_1,\cdots,\xi_p)\sim\mathcal{N}(0,I_p)$, we have
\begin{align*}
    \hat{\theta} &= \xi - \eta \left(\frac{n}{2} \cdot \frac{-e^{-\xi_1}}{1 + e^{-\xi_1}} \cdot (- \begin{bmatrix}
	-1\\
	0\\
	\vdots\\
	0\\
\end{bmatrix})  + \frac{n}{2} \cdot \frac{-e^{-\xi_1}}{1 + e^{-\xi_1}} \cdot \begin{bmatrix}
1\\
0\\
\vdots\\
0\\
\end{bmatrix} + \cN(0, \frac{G^2}{2\rho} I_p)\right) 
\\
&=\xi +\eta \left(\frac{e^{-\xi_1}}{1 + e^{-\xi_1}}\cdot\begin{bmatrix}
n \\
0\\
\vdots\\
0\\
\end{bmatrix}  + \cN(0, \frac{G^2}{2\rho} I_p)\right)
\end{align*}
The sensitivity is $\frac{e^{-\xi_1}}{1 + e^{-\xi_1}}<1$. Consider $x = (-1,0,\cdots,0)^T$. We have
\begin{align*}
    \widehat{\theta}^Tx = -\xi_1 + \eta\left( -\frac{ne^{-\xi_1}}{1 + e^{-\xi_1}}\right) + \mathcal{N}(0,\frac{G^2}{2\rho})=: \mu_{\xi_1,n} + \mathcal{N}(0,\frac{G^2}{2\rho}).
\end{align*}
The misclassification error is 
\begin{align*}
    \Pr[\widehat{\theta}^Tx>0] &= \mathbb{E}_{\xi_1\sim\mathcal{N}(0,1)}\Pr\left[\left.\mathcal{N}\left(\mu_{\xi_1,n},\frac{G^2}{2\rho}\right)>0\right|\xi_1\right] 
    \\
    &= \mathbb{E}_{\xi_1\sim\mathcal{N}(0,1)}\left[\Phi\left(\frac{\sqrt{2\rho}\mu_{\xi_1,n}}{G}\right)\right]
\end{align*}

\subsection{Gaussian initialization with off-set}
Denote $x_1 = -e_1 + v$ and $x_2 = e_1 + v$ with $\|v\|_{\infty} \leq \beta.$
For the logistic loss $\ell(y,\theta^Tx) = \log(1 + e^{-y\theta^T x}),$ we have
\begin{align*}
    g(\theta,y\cdot x) := \nabla_{\theta}\ell(y,\theta^T x) = \frac{e^{-y\theta^Tx}}{1 + e^{-y\theta^T x}} (-yx).
\end{align*}
Denote
\begin{align*}
    g_1(\theta) = g(\theta,-1\cdot x_1) = \frac{e^{\theta^T x_1}}{1 + e^{\theta^T x_1}} x_1
\end{align*}
and
\begin{align*}
    g_2(\theta) = g(\theta,1\cdot x_2) = \frac{e^{-\theta^T x_2}}{1 + e^{-\theta^T x_2}} (-x_2).
\end{align*}

If we shift the feature by some vector $v$, then the loss function is
\begin{align*}
    R_n = \frac{n}{2}\log(1 + e^{\theta^T x_1}) + \frac{n}{2}\log(1 + e^{-\theta^T x_2}). 
\end{align*}

And the gradient is
\begin{align*}
    \nabla_\theta R_n(\theta) &= \frac{n}{2} \left( g_1(\theta) + g_2(\theta)\right).
\end{align*}
Thus, the output of one-step NoiseGD is given by
\begin{align*}
    \widehat{\theta} = \theta_0 - \frac{\eta n}{2}\left[g_1(\theta_0) + g_2(\theta_0) + \mathcal{N}(0,\sigma^2) \right].
\end{align*}

Let $\mu_{\xi} = \xi - \frac{\eta n}{2} \left[g_1(\xi) + g_2(\xi) \right].$ Then, we have
\begin{align*}
    \mu_{\xi}^T e_1 = \xi_1 - \frac{\eta n e^{\xi^T x_1}}{2 + 2 e^{\xi^T x_1}}(-1 + v_1) + \frac{\eta n e^{\xi^T x_2}}{2 + 2 e^{\xi^T x_2}}(1 + v_1).
\end{align*}
And the misclassification error is 
\begin{align*}
    \mathbb{E}_{\xi} \left( \Phi\left(-\frac{\sqrt{2\rho}\mu_{\xi}^{T}e_1}{G}\right)\right).
\end{align*}

\end{document}